\newcommand{\xmark}{\text{\ding{55}}}
\newtheorem{assumption}{\hspace{0pt}\bf Assumption}
\newcommand{\INDSTATE}[1][1]{\STATE\hspace{3mm}}
\newcommand{\INDSTATED}[1][1]{\STATE\hspace{6mm}}
\begin{document}

\title{Consistent Online Gaussian Process Regression \\ Without the Sample Complexity Bottleneck}

\author{Alec Koppel, Hrusikesha Pradhan, Ketan Rajawat}       

\institute{Alec Koppel \at
              CISD, U.S. Army Research Laboratory, Adelphi, MD 20783, USA  \\
              \email{alec.e.koppel.civ@mail.mil}           
           \and
           Hrusikesha Pradhan \at
              Dept. of Electrical Engineering, Indian Institute of Technology Kanpur
Kanpur UP 208016 INDIA\\
\email{hpradhan@iitk.ac.in}
    \and
    Ketan Rajawat \at
    Dept. of Electrical Engineering, Indian Institute of Technology Kanpur
Kanpur UP 208016 INDIA\\
\email{ketan@iitk.ac.in} \\
{\scriptsize{Part of this work (no proofs and preliminary experiments) appeared as \citep{koppel2019consistent}. In contrast, here we provide full justification of all theoretical results and extensive experiments.} }} 

\date{Received: 23 Apr 2020 / Accepted: date}

\maketitle

\begin{abstract}%
Gaussian processes provide a framework for nonlinear nonparametric Bayesian inference widely applicable across science and engineering. Unfortunately, their computational burden scales cubically with the training sample size, which in the case that samples arrive in perpetuity, approaches infinity. This issue necessitates approximations for use with streaming data, which to date mostly lack convergence guarantees. Thus, we develop the first online Gaussian process approximation that preserves convergence to the population posterior, i.e., \emph{asymptotic posterior consistency}, while ameliorating its intractable complexity growth with the sample size. We propose an online compression scheme that, following each a posteriori update, fixes an error neighborhood with respect to the Hellinger metric centered at the current posterior, and greedily tosses out past kernel dictionary elements until its boundary is hit.
 We call the resulting method \emph{Parsimonious Online Gaussian Processes} (POG).  
For diminishing error radius, {asymptotic statistical stationarity is achieved} (Theorem \ref{theorem:pog_consistency}\ref{theorem:diminishing}) at the cost of unbounded memory in the limit. On the other hand, for constant error radius, POG converges to a neighborhood of {stationarity} (Theorem \ref{theorem:pog_consistency}\ref{theorem:constant}) but with finite memory at-worst determined by the metric entropy of the feature space (Theorem \ref{theorem:pog_parsimony}). {Here stationarity refers to the distributional distance between sequential marginal posteriors approaching null with the time index.} Experimental results are presented on several nonlinear regression problems which illuminates the merits of this approach as compared with alternatives that fix the subspace dimension defining the history of past points. 
%
\end{abstract}

\section{Introduction}\label{sec:mdps}
Gaussian process  regression \citep{rasmussen2004gaussian}, or kriging \citep{krige1951statistical}, is a framework for nonlinear nonparametric Bayesian inference widely used in chemical processing \citep{kocijan2016modelling}, robotics \citep{deisenroth2015gaussian}, and machine learning \citep{rasmussen2004gaussian}, among other applications. Unfortunately, in the batch setting, its complexity scales cubically $N^3$ with the training sample size $N$. Moreover, in the online/stochastic setting where the total number of training samples may be infinite $N\rightarrow \infty $ or samples continually arrive in a streaming fashion, this complexity tends to infinity. Thus, popular perception is that GPs cannot work online. 
In this work, we upend this perception by developing a method that approximately preserves the distributional properties of GPs in the online setting while yielding a worst-case {per-step and per-evaluation} complexity that is sample size independent, and is instead determined by the metric entropy of the feature space \citep{zhou2002covering}.

\begin{table*}[]
\centering
\renewcommand\tabcolsep{1.8pt}
\caption{Summary of related work on approximate Gaussian processes. {We provide the first convergence result for a sparse GP in the {online} setting.}}\label{Tab1}
\begin{tabular}{cccccccccccc}
\toprule
\multirow{1}{*}{Likelihood/Objective} & \multicolumn{1}{c}{Compression Statistic} & \multicolumn{1}{c}{Subspace Size ($M$)}  & \multicolumn{1}{c}{Convergence} & \multicolumn{1}{c}{Reference}  \\
\midrule
Posterior          &    None                 &   Infinite        & $\checkmark$     & \citep{ghosal2000convergence,van2008rates}          \\[5pt]

Prior    &  Log-likelihood  	& Fixed         & $\xmark$        &  \citep{smola2001sparse,keerthi2006matching}  \\[5pt]
Prior    &  RKHS metric  	& Fixed         & $\xmark$        &  \citep{csato2002sparse}  \\[5pt]
Prior    &  Information Gain      & Fixed   &  $\xmark$         & \citep{seeger2003fast}    \\[5pt]
Prior    &  KL Divergence/Entropy  & Dynamic     & $\xmark$        & \citep{mcintire2016sparse}     \\[5pt]
Variational {Lower}-Bound                                     & KL  Divergence                  & Fixed   & $\xmark$  &          \citep{titsias2009variational,bui2017streaming}      \\[5pt]
Variational {Lower}-Bound                                     & KL Divergence                   & Fixed   & $\checkmark$  &          \citep{mcallester1999pac,burt2019rates}    \\[5pt]
Posterior          &    Hellinger Metric                 &   Dynamic        & $\checkmark$      &This Work          \\[5pt]

\bottomrule
\end{tabular}
\end{table*}

Consider the batch setting $N<\infty$. The complexity bottleneck comes from the posterior mean and covariance's dependence on a data matrix, or kernel dictionary, that accumulates all past observations. To address this memory explosion, one must choose a subset of $M\ll N$ possible model points from the training set and ``project the posterior distribution" onto the ``subspace" defined by these samples. Various criteria for projection have been proposed based on information gain \citep{seeger2003fast}, greedy compression \citep{smola2001sparse}, Nystr{\"o}m sampling \citep{williams2001using}, probabilistic criteria \citep{solin2014hilbert,mcintire2016sparse,bauer2016understanding}, and many others \citep{bui2017streaming}. Unfortunately, {to the best of our knowledge}, when $N\rightarrow\infty$ {and samples are observed sequentially, i.e., for the online setting, all of these methods lack any (even approximate) notion of optimality, which for posterior inference is } known as \emph{asymptotic posterior consistency} \citep{barron1999consistency,ghosal2000convergence}. Posterior consistency means that the empirical distribution tends to its population counterpart, and hence the Gaussian mean and covariance tend to their Bayesian ground truth, as the sample size tends to infinity. 

Now, let's shift focus to GPs in the online setting \citep{roberts2013gaussian}. When written in a sequential manner, the posterior mean and covariance updates are parametrized by a kernel dictionary that grows by one every time a new sample arrives. Clearly, this is untenable for situations where samples arrive ad infinitum. The question, then, is how to selectively retain a subset of past training examples to ensure the posterior distribution nearly locks onto the population probability measure, while bringing the memory under control. 

In the case of supervised learning (frequentist regression/classification), one may break the curse of kernelization \citep{koppel2019parsimonious,koppel2018projected} by seeking a Lyapunov function of the sequential estimates, and projecting the function iterates in such a way to throw away as much information as possible while preserving the Lyapunov function's per-step evolution. In supervised learning, this may be achieved by ensuring the projection satisfies a stochastic descent property \citep{Bottou1998,nemirovski2009robust}. By contrast, in Bayesian a posteriori estimation, the Lyapunov function is the distance between the current empirical distribution and its population counterpart,  which contracts to null as the number of samples becomes large \citep{ghosal2000convergence}.

More specifically, posterior contraction means the empirical posterior probability measure based upon $t$ samples and its population counterpart become closer according to some metric. The Hellinger metric computed in closed form for multivariate Gaussians \citep{abou2012note}, and many rate results can be specified with respect to this metric \citep{choi2007posterior,van2008rates,van2009adaptive,kruijer2010adaptive,stuart2018posterior}. Thus, we define as a Lyapunov function of the sequential posterior estimates the Hellinger distance between the empirical posterior probability from $t$ samples and its population counterpart. Through this insight, our contributions are to:

\begin{itemize}
\item develop an algorithm called Parsimonious Online Gaussian Processes (POG), which executes online compression by fixing an error neighborhood of radius $\epsilon_t$ in terms of the Hellinger distance around the current posterior, and greedily tossing out as many model points as possible, while staying inside this neighborhood (Section \ref{sec:online}). This compression uses a custom destructive variant of matching pursuit \citep{Pati1993,Vincent2002} that quantifies error with respect to the Hellinger metric. Here $\epsilon_t$ denotes the compression budget \citep{koppel2019parsimonious}, and the greedy compression may be viewed as a hard-thresholding projection \citep{parikh2014proximal} onto subspaces of posterior distributions. 
\item establish that POG {converges to a statistical stationary point under a non-expansive condition on the single-step likelihood change}\footnote{{We define statistical stationarity as the distributional distance between subsequent marginal posterior distributions approaching null (Sec \ref{sec:convergence}}).} with respect to the population posterior when the compression budget approaches null $\eps_t\rightarrow 0$ as $t\rightarrow \infty$ (Theorem \ref{theorem:pog_consistency}\ref{theorem:diminishing}, Section \ref{sec:convergence})
\item establish that under constant compression budget $\eps_t=\eps>0$, the Bayesian posterior converges to within an $\epsilon$-neighborhood of {a statistical stationary point under a non-expansive condition on the single-step likelihood \\ change }(Theorem \ref{theorem:pog_consistency}\ref{theorem:constant}). Moreover, with this budget selection, the size of the kernel dictionary is at-worst finite, and defined by the metric entropy of the feature space (Theorem \ref{theorem:pog_parsimony}). Thus, we obtain {rigorous} tradeoff of \emph{approximate consistency} and \emph{model parsimony}.
\item experimentally (Section \ref{sec:experiments}) on the \textit{boston}, \textit{abalone} and \textit{kin-40k} data sets we observe favorable tradeoffs between performance and complexity relative to fixed subspace approaches in the offline \citep{snelson2006sparse} and online setting \citep{csato2002sparse}. 
%
\end{itemize}

{\bf \noindent Context} We expand upon related approaches, as seeking tractable approximations of online GPs has been studied for years. One approach is to focus on the exact GP likelihood, and compute the inference exactly, in the spirit of John Tukey's truism: ``Far better an approximate answer to the right question...than an exact answer to the wrong question..." \citep{tukey1962future}. This approach,  Fully Independent Training Conditional (FITC), originating in \citep{csato2002sparse}, derives an additive decomposition of the GP likelihood, and then projects the posterior parameters onto a subspace of fixed size. This approach, called Sparse Online Gaussian Processes (SOGP), has given rise to numerous variations that modify the projection criterion using, e.g., information gain \citep{seeger2003fast}, the negative log-likelihood of the posterior \citep{smola2001sparse,keerthi2006matching}, or regularizations of the likelihood \citep{mcintire2016sparse}. The commonality of these approaches is that their posterior subspace dimension is fixed in advance. In the offline setting, one may additionally optimize over the retained points in the subspace, an approach dubbed pseudo-input search \citep{snelson2006sparse}.

An alternative approach is to employ a variational {lower}-bound of the GP likelihood, e.g., the Variational Free Energy (VFE) \citep{titsias2009variational} or Expectation Propagation \citep{csato2002gaussian}, which as illuminated by \citep{bui2017streaming}, can be employed in the online setting by subsuming the GP approximation into streaming variational Bayes \citep{broderick2013streaming}. Online variational approaches remain an active area of research, which are well-summarized in \citep{liu2018gaussian}. Recently, performance of various approximations has been characterized in terms of approximate convergence to a variational {lower}-bound on the population likelihood \citep{mcallester1999pac,burt2019rates}. Moreover, \citep{toth2019variational} employs them for time-series (non-i.i.d.) analysis, and builds geometric intuition for the approximation subspace \citep{shi2019sparse}. 

Predominately, the aforementioned approaches fix the {maximal} subspace size, which is not inherently a drawback, except {that fixing a priori how many points are permitted may preclude} convergence in terms of any law of large numbers \citep{ghosal2000convergence}, or a {lower}-bound thereof \citep{reeb2018learning}. See \citep{roberts2013gaussian,bauer2016understanding} for reviews of challenges and approaches for the online setting, or \citep{quinonero2007approximation, banerjee2012efficient,rasmussen2004gaussian}[Ch. 8] for experimentation on offline approximations. {More specifically, these approaches permit the active set of points that form the prior for the next posterior inference to grow or shrink according to their statistical significance, but only up to an a priori fixed upper-limit. Doing so may or may not include enough points required for convergence.}

By contrast, in this work, we do not fix the memory that defines the number of past retained points, but instead fix the compression-induced error, and allow the subspace dimension to grow/shrink with the importance of new information. Most similar to this work is \citep{mcintire2016sparse}, which experimentally (but not theoretically) substantiates the merits of allowing the subspace dimension to continually evolve. In this work, we derive a compression rule directly motivated by laws of large numbers for GPs \citep{ghosal2000convergence,van2008rates}, whose tradeoffs we establish both in theory and practice. In particular, the proposed compression routine provably {nearly} converges to {approximate consistency}, and retains finitely many points in its approximate posterior. See Table \ref{Tab1} for a summary.
\section{Gaussian Process Regression}\label{sec:prob}

In GP regression \citep{krige1951statistical,rasmussen2004gaussian} there is some function $f(\bbx)$ that models the relationship between random variables $\bbx\in\ccalX\subset\reals^p$ and $y\in\ccalY\subset\reals$, i.e., $\hat{y}=f(\bbx)$, that we are trying to estimate upon the basis of $N$ training examples $\ccalS=\{\bbx_n, y_n\}_{n=1}^N$. Unlike in expected risk minimization (ERM), we do not learn this estimator by solving an optimization problem that defines its merit of fitness, but instead assume that this function $f(\bbx)$ follows some particular parametrized family of distributions, and then we seek to estimate those parameters. 

In particular, for GPs, we place a {fixed} \emph{prior} on the distribution of $\bbf_{\ccalS}=[f(\bbx_n),\cdots,f(\bbx_N)]$ as a Gaussian distribution, namely, $\bbf_{\ccalS}\sim \ccalN( \bb0,\bbK_N)$. Here $\ccalN(\bbmu,\bbSigma)$ denotes the multivariate Gaussian distribution in $N$ dimensions with mean vector $\bbmu\in\reals^N$ and covariance $\bbSigma\in\reals^{N\times N}$.  In GP regression, the covariance $\bbK_N=[\kappa(\bbx_m,\bbx_n)]_{m,n=1}^{N,N}$ is constructed from a distance-like kernel function $\kappa:\ccalX\times\ccalX\rightarrow\reals$ defined over the product set of the feature space. The kernel expresses some prior about how to measure distance between points, a common example of which is itself the Gaussian, $[\bbK_N]_{mn}=\kappa(\bbx_m,\bbx_n)=\exp\{-\|\bbx_m-\bbx_n\|^2/c^2\}$ with bandwidth hyper-parameter $c$. 
 
 In standard GP regression, we further place a zero-mean Gaussian prior on the noise that corrupts $\bbf_\ccalS$ to form observations $\bby=[y_1,\cdots,y_N]$, i.e., $\mathbb{P}(\bby\given \bbf_\ccalS) =\ccalN(\bbf_{\ccalS},\sigma^2\bbI)$ where $\sigma^2$ is some variance parameter. In Section \ref{sec:convergence} we use $\Pi$ to denote the measure associated with the Gaussian prior. We may integrate out the prior on $\bbf_{\ccalS}$ to obtain the marginal likelihood for $\bby$ as 
 \begin{equation}\label{eq:GP_prior}
\mathbb{P}( \bby\given \ccalS) = \ccalN(\bb0,\bbK_N + \sigma^2 \bbI)
 \end{equation}
 Upon receiving a new data point $\bbx_{N+1}$, we make a Bayesian inference $\hat{y}_{N+1}$, not by defining a point estimate $f(\bbx_{N+1})=\hat{y}_{N+1}$. Instead, we formulate the entire \emph{posterior} distribution for $y_{N+1}$ as:
 \begin{align}\label{eq:GP_posterior}
\mathbb{P}( y_{N+1} \given \ccalS \cup \bbx_{N+1})= \ccalN\Big(\bbmu_{N+1\given \ccalS},\bbSigma_{N+1\given \ccalS}\Big) 
 \end{align}
 where the mean and covariance in \eqref{eq:GP_posterior} are given by 
 \begin{align}\label{eq:GP_posterior_params}
\bbmu_{N+1\given \ccalS}&=\bbk_{\ccalS}(\bbx_{N+1})[\bbK_N+ \sigma^2 \bbI]^{-1}\bby_N\nonumber  \\
\bbSigma_{N+1\given \ccalS}&=\kappa(\bbx_{N+1},\bbx_{N+1})   \\
&- \bbk_{\ccalS}^T(\!\bbx_{N+1}\!)\![\bbK_N\!\!+\! \sigma^2 \bbI]^{-1}\!\bbk_{\ccalS}(\!\bbx_{N+1}\!) + \sigma^2  \nonumber
 \end{align}
 The expressions \eqref{eq:GP_posterior_params} are standard \citep{rasmussen2004gaussian}[Ch. 2]. Here $\bbk_{\ccalS}(\bbx) = [\kappa(\bbx_1, \bbx) ; \cdots \kappa(\bbx_N, \bbx)]$ denotes the empirical kernel map.
While this approach to sequential Bayesian inference provides a powerful framework for fitting a mean and covariance envelope around observed data, it requires for each $N$ the computation of $\bbmu_{N+1\given \ccalS}$ and $\bbSigma_{N+1\given \ccalS}$ which crucially depend on computing the inverse of the kernel matrix $\bbK_N$ every time a new data point arrives. It is well-known that matrix inversion has cubic complexity $\ccalO(N^3)$ in the variable dimension $N$, which may be reduced through use of {partial} Cholesky factorization \citep{foster2009stable} or subspace projections \citep{banerjee2012efficient} combined with various compression criteria such as information gain \citep{seeger2003fast}, mean square error \citep{smola2001sparse}, integral approximation for Nystr{\"o}m sampling \citep{williams2001using}, probabilistic criteria \citep{mcintire2016sparse,bauer2016understanding}, and many others \citep{bui2017streaming}. {Doing so requires one to fix an upper-bound on the number of of points $M \ll N < \infty$ that defines the posterior, which may or may not be sufficient for convergence -- rates of convergence for sparse GPs have been discerned only for the \emph{offline setting} \citep{mcallester1999pac,burt2019rates}.}

 However, even with this complexity reduction, if one tries to run GP regression in true streaming applications where the sample size is not necessarily finite $N\rightarrow \infty$, any computational savings is eventually rendered moot useless unless one ensures the {per update} complexity remains independent of the sample size. Thus, our objective is to find an approximate GP whose memory is sample complexity independent, yet is as close as possible in distribution to the fully infinite (as $N\rightarrow \infty $) dense GP. Next, we shift focus to developing such a method.

 \section{Online Gaussian Processes}\label{sec:online}
 
 In this section, we derive our new algorithm Parsimonious Online Gaussian Process (POG) which is nothing more than a rewriting of the posterior update \eqref{eq:GP_posterior} in time-series manner, and constructing an online sparsification rule that ensures its complexity remains under control, while also preserving {a relaxed notion of posterior consistency}. Define the time series of observations as $\mathcal{S}_t=\{\bbx_u,y_u\}_{u\leq t}$, and then rewrite \eqref{eq:GP_posterior} in a time-varying way in terms of $\mathcal{S}_t\cup\{\bbx_{t+1}\}$ as 
  \begin{align}\label{eq:GP_posterior_t}
 \bbmu_{t+1\given \ccalS_t}&=\bbk_{\ccalS_t}(\bbx_{t+1})[\bbK_t+ \sigma^2 \bbI]^{-1}\bby_t\nonumber  \\
 \bbSigma_{t+1\given \ccalS_t}&=\kappa(\bbx_{t+1},\bbx_{t+1})   \\
& -  \bbk_{\ccalS_t}^T(\bbx_{t+1}\!)[\bbK_t+ \sigma^2 \bbI]^{-1}\bbk_{\ccalS_t}(\bbx_{t+1}\!)+ \sigma^2.\nonumber
 \end{align}

Observe that this update causes the dictionary $\bbX_t:=[\bbx_1;\cdots;\bbx_{t}]\in \mathbb{R}^{p\times t}$ to grow by one at each iteration i.e. $\bbX_{t+1}=[\bbX_t;\bbx_{t+1}]\in\reals^{p\times (t+1)}$, and that the posterior estimates at time $t+1$ use \emph{all past observations} $\{\bbx_u\}_{u\leq t}$ in the kernel dictionary $\bbX_{t+1}$. Subsequently, we refer to the number of columns in the dictionary matrix as the \emph{model order} $M_t$. The GPs posterior estimates at time $t$ have model order $M_t=t$. For future reference, denote the posterior distributions of $y_t$ and $y_{t+1}$ defined by \eqref{eq:GP_posterior_t}, as $\rho_t=\mathbb{P}( y_{t} \given \ccalS_{t-1} \cup \bbx_{t}) $ and $\rho_{t+1}=\mathbb{P}( y_{t+1} \given \ccalS_{t} \cup \bbx_{t+1}) $, respectively. {We further note $\rho_t(\cdot)= \mathbb{P}( y_{t} \given \ccalS_{t-1} \cup \cdot ) $ as the the full posterior viewed as a function to be evaluated at a given test point $\tbx\in\ccalX$.

In this work, we focus on the online (or \emph{one-shot}) setting, where in-situ performance is prioritized over extrapolation at points far away from  $\bbx_{t+1}$. Therefore, we focus on ensuring the quality of the sequence $\rho_t$, the GP evaluated at $\bbx_{t+1}$, but not the more general function space view in terms of $\rho_t(\cdot)$. It may be possible to generalize the techniques here from sequential marginal posteriors to the entire posterior sequence $\rho_t(\cdot)$, doing so would likely require a subsampling step or moment matching scheme (similar to particle filters or the resampling step in importance sampling \citep{smith2013sequential}), which must be conducted in a fundamentally \emph{offline} manner. This time-scale separation precludes the development of such algorithms for online settings, but is nonetheless a meritorous direction for future work.}

\subsection{Compressing the Posterior Distributions}

Our memory-reduction technique relies on the following conceptual link between sparsity in stochastic programming and Bayesian inference: a typical Lyapunov (potential) function of a online supervised learning algorithm is the mean sub-optimality, which quantifies the ``energy" in an optimization algorithm. If the method has been appropriately designed, the Lyapunov function is negative semi-definite, and thus flows to the minimum energy state, yielding convergence to optimality \citep{khalil1996noninear}. When sparsity considerations are additionally present, the role of a proximal \citep{atchade2014stochastic}/hard-thresholding \citep{nguyen2017linear} operator in tandem with the descent direction must be analyzed to establish decrement in expectation.

Motivated by the {stabililty analysis} of stochastic gradient hard-thresholding algorithms and {related notions} in sparse nonparametric function estimation \citep{koppel2019parsimonious}, we seek a Lyapunov function of the sequential Bayesian {a posteriori} update for GPs, and develop a custom hard-thresholding projection that preserves its per-step behavior. In Bayesian inference, there is no optimization iterate or gradient, but instead only posterior estimates and their associated distributions. Thus, to quantify convergence, we propose measuring how far the empirical posterior is from {the true posterior distribution} according to some {distributional} metric, which actually quantifies distance from \emph{posterior consistency} \citep{barron1999consistency}. {More specifically, $y_t$ is assumed to follow some unknown generating process $f_0$ corrupted by noise, whose associated posterior distribution is denoted as $\rho_0$. These entities are then used to define a notion of distributional convergence in Lemma \ref{lemma:posterior_consistency}.}

\begin{algorithm}[t]
	\caption{Parsimonious Online GPs (POG)}
	\begin{algorithmic}
		\label{alg:pog}
		\REQUIRE $\{\bbx_t,y_t,\epsilon_t \}_{t=0,1,2,...}$
		\STATE \textbf{initialize} {observation} noise $\sigma^2$,  empty dictionary $\bbD_0 = []$
		\FOR{$t=0,1,2,\ldots$}
		\STATE Obtain independent training pair $(\bbx_{t+1}, y_{t+1})$
		\STATE Update posterior mean and covariance estimates \eqref{eq:GP_posterior_D}
		\begin{align*}
		\bbmu_{t+1\given \bbD_{t} }&=\bbk_{\bbD_{t}}(\bbx_{t+1})[\bbK_{\bbD_{t},\bbD_{t}}+ \sigma^2 \bbI]^{-1}\bby_t\nonumber  \\
		\bbSigma_{t+1\given \bbD_{t}}&\!\!=\!\kappa(\bbx_{t+1},\bbx_{t+1}) 
		 \nonumber \\
		& - \bbk^T_{\bbD_{t}}\!(\!\bbx_{t+1}\!) [\bbK_{\bbD_{t},\bbD_{t}}\!\!+\! \sigma^2 \bbI]^{-1}\bbk_{\bbD_{t}}(\bbx_{t+1})+ \sigma^2
		\end{align*}\vspace{-4mm}
		\STATE Revise dictionary via {a posteriori} update $\tbD_{t+1}\! = \![\bbD_{t},\bbx_{t+1}]$
		\STATE Compress w.r.t. Hellinger metric via Algorithm \ref{alg:komp} 
	\begin{align*}(\bbmu_{\bbD_{t+1}},\bbSigma_{\bbD_{t+1}}&,\bbD_{t+1}) \nonumber \\
		&=\textbf{DHMP}(\bbmu_{t+1\given \bbD_{t}},\bbSigma_{t+1\given \bbD_{t}},\tbD_{t+1},\eps_t)
		\end{align*}
		
		\ENDFOR
	\end{algorithmic}
\end{algorithm}
We fix our choice of metric as the Hellinger distance due to the fact that it is easily computable for two multivariate Gaussians \citep{abou2012note}. For two continuous distributions $\nu(\bbx)$ and $\lambda(\bbx)$ over feature space $\ccalX$, the Hellinger distance is defined as
\begin{equation}\label{eq:hellinger}
d_H(\nu,\lambda) := \frac{1}{2}\int_{\bbx \in \ccalX} \left(\sqrt{\nu(\bbx)} - \sqrt{\lambda(\bbx)}\right)^2 d\bbx \; , 
\end{equation}
which, when both distributions are normal $\nu = \ccalN(\bbmu_1,\Sigma_1)$ and  $\lambda = \ccalN(\bbmu_2,\Sigma_2)$ \citep{abou2012note}, takes the form
\begin{align}\label{eq:hellinger_gaussian}
&d_H(\nu,\lambda)  \\
&\!\!=\! \sqrt{1 \!-\! \frac{|\Sigma_1 |^{1/4} |\Sigma_2|^{1/4}}{| \bar{\Sigma}|} \!\exp\!\left\{\!\! -\frac{1}{8}\!\left(\mu_1 \!-\! \mu_2\right)\bar{\Sigma}^{-1} \left(\mu_1 - \mu_2 \right)  \!\right\}\! } \nonumber 
\end{align}
where $\bar{\Sigma}=(\Sigma_1 + \Sigma_2)/2$.  The distribution defined by the Bayesian updates \eqref{eq:GP_posterior} converges to the underlying population distribution in Hellinger distance with probability 1 with respect to the population posterior distribution (see \citep{choi2007posterior}[Theorem 6], \citep{van2008rates}[Theorem 3.3], \citep{kruijer2010adaptive}[Theorem 2]).
These posterior contraction results motivate the subsequence design of the compression sub-routine. If we select some other kernel dictionary $\bbD\in \mathbb{R}^{p\times M}$ rather than $\bbX_t$ for some model order $M$, the only difference is that the kernel matrix $\bbK_t$ in \eqref{eq:GP_posterior_t} and the empirical kernel map $\bbk_{\ccalS}(\cdot)$ are substituted by $\bbK_{\bbD\bbD}$ and $\bbk_{\bbD}(\cdot)$, respectively, where the entries of $[\bbK_{\bbD,\bbD}]_{mn}=\kappa(\bbd_m,\bbd_n)$, $\bbk_{\bbD}=[\kappa(\bbd_1, \cdot); \cdots; \kappa(\bbd_M, \cdot)]$ and $\{\bbd_m\}_{m=1}^M\subset\{\bbx_u\}_{u\leq t}$. Let's rewrite \eqref{eq:GP_posterior_t} for sample $(\bbx_{t+1},y_{t+1})$ with $\bbD$ as the kernel dictionary rather than $\bbX_{t}$ as
\begin{align}\label{eq:GP_posterior_D}
\bbmu_{t+1\given \bbD}&=\bbk_{\bbD}(\bbx_{t+1})[\bbK_{\bbD,\bbD}+ \sigma^2 \bbI]^{-1}\bby_t\nonumber  \\
\bbSigma_{t+1\given \bbD}&=\kappa(\bbx_{t+1},\bbx_{t+1})  \\
&- \! \bbk_{\bbD}^T(\!\bbx_{t+1}\!)[\bbK_{\bbD,\bbD}\!+\! \sigma^2 \bbI]^{-1}\bbk_{\bbD} (\bbx_{t+1})+ \sigma^2.\nonumber
\end{align}
The question, then, is how to select a sequence of dictionaries $\bbD_{t}\in\mathbb{R}^{p\times M_t}$ whose $M_t$ columns
comprise a subset of those of $\bbX_t$ in such a way to preserve asymptotic posterior consistency.

\begin{algorithm}[t]
	\caption{Destructive Hellinger Matching Pursuit (DHMP) \hspace{-2mm}}
	\begin{algorithmic}
		\label{alg:komp}
		\REQUIRE  posterior dist. $\rho_{\tbD}$ defined by dict. ${\tbD} \in \reals^{p \times \tilde{M}}$, mean $\bbmu_{t+1\given {\tbD} }$ covariance $\bbSigma_{t+1\given {\tbD}}$, budget  $\epsilon_t > 0$ \\
		\STATE \textbf{initialize} mean $\bbmu_{\tbD}=\tbmu_{t+1\given \tbD }$, cov. $\bbSigma_{\bbD}=\tbSigma_{t+1\given \tbD }$,\\ dict. $\bbD = {\tbD}$ w/ indices $\ccalI$ with  model order $M=\tilde{M}$
		\WHILE{candidate dictionary is non-empty $\ccalI \neq \emptyset$}
		{\FOR {$j=1,\dots,\tilde{M}$}
			\STATE Compute mean $\bbmu_{\bbD_{-j}}$, cov. $\bbSigma_{\bbD_{-j}}$ w/o dict. point $\bbd_j$ 
			\STATE Find error with dict. element $\bbd_j$ removed \eqref{eq:hellinger_gaussian} \vspace{-2mm}
			$$\gamma_j = d_H(\rho_{\bbD_{-j}}, \rho_{\tbD} ) \; .$$
			%
			\ENDFOR}
		\STATE Find minimal error dict. point: $j^* = \argmin_{j \in \ccalI} \gamma_j$
		\INDSTATE{{\bf{if }} minimal approximation error exceeds $\gamma_{j^*}> \epsilon_t$}
		\INDSTATED{\bf stop} 
		\INDSTATE{\bf else} 
		
		\INDSTATED Prune dictionary $\bbD\leftarrow\bbD_{\ccalI \setminus \{j^*\}}$
		\INDSTATED Revise set $\ccalI \leftarrow \ccalI \setminus \{j^*\}$, model order ${M} \leftarrow {M}-1$.
		%
		\INDSTATE {\bf end}
		\ENDWHILE	
		\RETURN dictionary $\bbD$ such that $d_H(\rho_{\bbD} , \rho_{\tbD})\leq \eps_t$
	\end{algorithmic}
\end{algorithm}


Suppose we have a dictionary $\bbD_t\in\mathbb{R}^{p\times M_t}$ at time $t$ and observe point $\bbx_{t+1}$. We compute its associated posterior distribution $\rho_{\bbD_{t}}:=\ccalN\big(\bbmu_{t+1\given \bbD_{t}},\bbSigma_{t+1\given \bbD_{t}}\big)$, where the expressions for the mean and covariance can be obtained by substituting $\bbD=\bbD_{t}$ into \eqref{eq:GP_posterior_t}, assuming that $\bbD_t$ has already been chosen. {We define $\rho_{\bbD_{t}}(\cdot)$ analogously to $\rho_t(\cdot)$ following equation \eqref{eq:GP_posterior_t}.}
 
 We propose compressing dictionary $\tbD_{t+1}$ of model order $M_t +1$ to obtain a dictionary $\bbD_{t+1}$ of smaller model complexity $M_{t+1} \leq M_t +1 $ by executing the update in \eqref{eq:GP_posterior_D}, fixing an error neighborhood centered at  $\ccalN\big(\bbmu_{t+1\given \bbD_{t}},\bbSigma_{t+1\given \bbD_{t}}\big)$ in the Hellinger
 metric. Then, we prune dictionary elements greedily with respect to the Hellinger metric until we hit the boundary of this error neighborhood. This is a destructive variant of matching pursuit \citep{Pati1993,Vincent2002} that has been customized to operate with the Hellinger distance, and is motivated by the fact that we
 can tune its stopping criterion to assure that the intrinsic distributional properties of the Bayesian
 update are almost unchanged. We call this routine  Destructive Hellinger Matching Pursuit (DHMP) with budget parameter $\epsilon_t$. 
 
 DHMP with compression budget $\epsilon_t$, summarized in Algorithm \ref{alg:komp}, operates by taking as input a kernel dictionary $\tbD_{t+1}$ and associated posterior mean and covariances estimates $\bbmu_{t+1\given \bbD_{t}},\bbSigma_{t+1\given \bbD_{t}}$ and initializing its approximation as the input. Then, it sequentially and greedily removes kernel dictionary elements $\bbd_j$ according to their ability to that cause the least error in Hellinger distance. Then, it terminates when the resulting distribution $\rho_{\bbD}$ hits the boundary of an $\eps_t$ error neighborhood in Hellinger
distance from its input. This procedure with input $\tbD_{t+1}\in \reals^{p\times M_{t}+1}$, $\bbmu_{t+1\given \bbD_t},\\ \bbSigma_{t+1\given \bbD_t}$ and parameter $\eps_t$, is summarized in Algorithm \ref{alg:komp}. 

The full algorithm, Parsimonious Online Gaussian Processes (POG), is summarized as Algorithm \ref{alg:pog}. It is the standard sequential Bayesian {a posteriori} updates of GP regression \eqref{eq:GP_posterior_t} with current dictionary $\bbD_{t}$ operating in tandem with DHMP (Algorithm \ref{alg:komp}), i.e.,
\begin{align}\label{eq:pog}
(\bbmu_{\bbD_{t+1}},&\bbSigma_{\bbD_{t+1}},\bbD_{t+1})\nonumber \\
&=\textbf{DHMP}(\bbmu_{t+1\given \bbD_{t}},\bbSigma_{t+1 \given \bbD_{t}},\tbD_{t+1},\eps_t).
\end{align}
%
 We denote the Gaussian distribution with mean  $\bbmu_{t+1\given \bbD_{t+1}}$ and covariance $\bbSigma_{t+1 \given \bbD_{t+1}}$ at time $t+1$ as $\rho_{\bbD_{t+1}}$.

The compression budget $\eps_t$ may be chosen to
carefully trade off closeness to the population posterior distribution and model parsimony. This tradeoff is the subject of the subsequent section. {Before shifting focus, we provide a more thorough accounting of computational aspects.}

{
\subsection{Computational Effort}\label{subsec:complexity}
As previously noted, the complexity of implementing the dense GP update \eqref{eq:GP_posterior_t} in the online setting is $\ccalO(p t^3)$, due to the need to invert the Gram matrix of kernel evaluations $\mathbf{K}_t$, where $p$ is the parameter dimension ($\bbx\in\ccalX\subset \reals^p$). Moreover, evaluating the empirical kernel map $\bbk_{\ccalS_t}^T(\bbx_{t+1})$ requires at least $\mathcal{O}(p t)$ operations. 
Existing fixed-memory subspace methods \citep{smola2001sparse,seeger2003fast,williams2001using,solin2014hilbert,mcintire2016sparse,bauer2016understanding,bui2017streaming} reduce this complexity bottleneck in sample size $t$ to $\ccalO(p t M^2 )$, where $M$ is an a priori fixed subspace dimension. One way to select $M$ is through the effective problem dimension $d_{\text{eff}}=\text{Tr}\left(\bbK_t (\bbK_t + \lambda \bbI)^{-1}\right)$, i.e., $M=d_{\text{eff}}$ \citep{zhang2003effective,calandriello2017distributed}, which has given rise to specific Nystr{\"o}m sampling schemes \citep{alaoui2015fast,yang2017randomized}. However, doing so supposes one may pre-compute the kernel matrix $\bbK_t$ over $t$ samples, which is inoperable in the online setting. Augmented dimension selection rules based on eigenvalues of $\bbK_t$ have been developed in \citep{calandriello2016analysis,alaoui2015fast}, but again, it is unclear how these statistics may be computed in an incremental manner. \smallskip

{\noindent {\bf Parametric Updates} }By contrast, in this work, the model order $M_t$ is not fixed a priori, but instead adapted sequentially according to \eqref{eq:GP_posterior_D} - \eqref{eq:pog}. This makes the parametric updates per step have $\mathcal{O}(p M_t^3)$ complexity if the kernel matrix inverses are naively computed at each time $t$. However, this may be reduced to  $\mathcal{O}(p M_t^2)$ through using classical rank-one, block-inverse update rules, via the Sherman-Woodbury-Morrison identity -- see \citep{1315946}. This accounts for only the parametric updates, however. \smallskip

{\noindent {\bf Posterior Projections}}  Now we shift to discussing the complexity of computing the pruning phase, i.e., the posterior projections. To do so, denote as $Z_t$ the number of points removed from $\tbD_{t+1}$, and note that $M_{t+1}=M_t$ for $Z_t=1$. By efficiently reusing the kernel matrix inverses as described in the previous step, each parametric evaluation of a GP posterior requires computational
complexity of $\ccalO(p M_t^2)$, and there are $Z_t$ total evaluations of this type. \footnote{A course upper-bound for $Z_t$ is $M_t$, although in practice it tends to be  $2-5$ in transient phase, and is $1$ at stationarity.} Thus, Algorithm \ref{alg:komp} is implementable in $\ccalO(p Z_t M_t^2)$ operations.

 {\noindent {\bf Overall Accounting} } Therefore, the per-step complexity of implementing Algorithm \ref{alg:pog} is $\ccalO(p Z_t M_t^2)$, which when run for $N$ total steps requires $\ccalO(p N Z_t M_t^2)$. Via Theorem \ref{theorem:pog_parsimony}, then, $M_t$ may be replaced in the preceding expression as $\ccalO( N Z_t p (1/\epsilon)^{2p})$, which at steady state $Z_t=1$ simplifies to $\ccalO( N  p (1/\epsilon)^{2p})$. Note this quantity does not require any oracle access to statistics of kernel matrices in advance of running.

}


\section{Balancing Stationarity and Parsimony}\label{sec:convergence}

The foundation of our technical results are the well-developed history of convergence of the empirical posterior \eqref{eq:GP_posterior_t} to the true population posterior \citep{barron1999consistency}. Various posterior contraction rates are available, but they depend on the choice of prior, the underlying smoothness of the generative process $f$, the choice of metric \citep{van2008rates,van2009adaptive,kruijer2010adaptive}, the sampling coverage and radius of the feature space \citep{stuart2018posterior}, among other technicalities. To avoid descending down the rabbit hole of measure theory and functional analysis, we opt for the simplest technical setting we could find for asymptotic posterior consistency of \eqref{eq:GP_posterior_t}, which is stated next.
%
\begin{lemma}\label{lemma:posterior_consistency}\citep{choi2007posterior}[Theorem 6(2) and Sec. 6, Ex. 2] Assume that $f_0$ is the true response function, $\sigma_0^2$ is the true noise variance that corrupts observations $\{y_t\}_{t=0}^\infty$, and $\rho_0$ is the associated true population posterior. Assume the following conditions:
\begin{enumerate}
\item  Suppose training examples $\bbx_t\in\ccalX\subset\reals^p$ are sampled from $\ccalX=[0,1]^p$, and $\{\bbx_t, \bby_t\}_{t=1}^\infty \overset{i.i.d.}{\sim} \mathbb{P}_0$, where $\mathbb{P}_0$ is the true joint distribution of $(\bbx,\bby)$. \label{as:iid}
%
%
%
\item Denote $\lambda(\ccalA)$  as the Lebesgue measure on the hypercube {$\ccalA$ in $[0,1]^p$}. There exists a constant $0<K_p\leq 1$ such that whenever $\lambda(\ccalA)\geq \frac{1}{K_p t}$, {$\ccalA$} contains at least one sample $\bbx_t$.
\item For $t\geq 1$, kernel matrix $\bbK_t$ is positive definite $\bbK_t \succ \bb0$. \label{as:positive_definite}
\item The kernel $\kappa(\bbx, \bbx')$ is of the form $\kappa(\beta\|\bbx - \bbx'\|)$ for some strictly positive $\beta>0$. \label{as:kernel_form}
\item For each $t\geq 1$, there exist positive constants $0<\delta<1/2$ and $b_1,b_2 > 0$, such that the covariance kernel parameter $\beta$ satisfies 
$\mathbb{P}  \{ { \beta > t^{\delta}} \} < b_1 e^{-b_2 t} $ . \label{as:kernel_hyperparameter}
\end{enumerate}
Then, for every $\gamma>0$, the posterior $\rho_t$ defined by \eqref{eq:GP_posterior_t} is asymptotically consistent, i.e.,
\begin{equation}\label{eq:posterior_consistency_uncompressed}
\mathbb{P}_{\Pi}\left\{ d_H(\rho_t,\rho_0)<\gamma \given \ccalS_t \right\} \rightarrow 1 \text{ a.s. with respect to } \mathbb{P}_0
\end{equation}
where the probability $\mathbb{P}_{\Pi}$ computed in \eqref{eq:posterior_consistency_uncompressed} is the Gaussian prior density $\Pi$ as in Sec. \ref{sec:prob}. 
\end{lemma}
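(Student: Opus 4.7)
The plan is to invoke the general framework of Schwartz-type posterior consistency, adapted to the GP regression setting. The central idea is that posterior consistency in the Hellinger metric follows from two classical ingredients: (i) a Kullback--Leibler (KL) prior positivity condition, i.e., the GP prior $\Pi$ places positive mass on every KL-neighborhood of the true joint density induced by $(f_0,\sigma_0^2)$; and (ii) the existence of a sequence of uniformly exponentially consistent tests that separate the true density from its Hellinger complement. Once both are in place, a standard argument using Fubini and the exponential decay of the numerator/denominator of the posterior gives \eqref{eq:posterior_consistency_uncompressed} almost surely under $\mathbb{P}_0$.

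The first step I would carry out is the verification of the KL support condition. Using assumption \ref{as:kernel_form} together with the tail bound on $\beta$ in assumption \ref{as:kernel_hyperparameter}, I would show that the reproducing kernel Hilbert space associated with $\kappa(\beta\|\cdot - \cdot\|)$ is rich enough, for $\beta$ in a high-prior-probability set, to approximate $f_0$ arbitrarily well in sup-norm on $[0,1]^p$. Coupled with positivity of the prior on $\sigma^2$ in a neighborhood of $\sigma_0^2$, this transfers to prior mass on KL balls of the joint observation density via a standard Gaussian-noise calculation (the KL divergence between two regression models with nearby means and nearby noise variances is controlled by the sup-norm error plus the squared difference of the variances). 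The sampling-coverage assumption, which forces $[0,1]^p$ to be visited at the rate $1/(K_p t)$, turns the population KL expectation into an almost-sure statement about the empirical log-likelihood ratios via the i.i.d.\ hypothesis in \ref{as:iid}.

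The second step, and the one I expect to be the main obstacle, is constructing the exponentially consistent tests. Here I would follow the sieve approach: build a sequence of sieves $\mathcal{F}_t\subset C([0,1]^p)\times \reals_{>0}$ consisting of functions of bounded RKHS norm and $\beta\leq t^\delta$ (allowed by \ref{as:kernel_hyperparameter}), bound the metric entropy of $\mathcal{F}_t$ in the Hellinger distance using the smoothness of the stationary kernel and standard covering-number estimates for balls in an RKHS with exponentially-decaying eigenvalues, and cover the complement $\mathcal{F}_t^c$ by the prior tail bound. Positive-definiteness of $\bbK_t$ in \ref{as:positive_definite} guarantees that the Gaussian posterior is well-defined on the sieve so the likelihood ratio can be handled via Hoeffding/Bernstein-type concentration for sub-Gaussian observations. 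Combining the sieve entropy bound with the prior mass estimate from the previous step yields tests with errors decaying faster than the lower bound on the denominator of the posterior ratio, delivering \eqref{eq:posterior_consistency_uncompressed}.

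The final step is to assemble: apply Schwartz's theorem with the prior-mass lower bound and the test sequence, then pass to the Hellinger neighborhood statement by observing that the constructed tests separate Hellinger balls. The $\{\bbx_t,\bby_t\}$ i.i.d.\ assumption together with the Lebesgue-measure coverage ensures the empirical measure is rich enough for the law of large numbers driving the log-likelihood ratio to hold $\mathbb{P}_0$-a.s., giving the almost-sure statement rather than just convergence in probability. I would not attempt sharper contraction rates (which require matching metric-entropy and prior-mass bounds \`a la Ghosal--van der Vaart); the qualitative statement in \eqref{eq:posterior_consistency_uncompressed} is precisely what is needed to serve as the Lyapunov anchor for the compression analysis in the sequel.
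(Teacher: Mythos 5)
The paper does not actually prove this lemma: it is imported verbatim from \citep{choi2007posterior} (their Theorem 6(2) together with Section 6, Example 2), and the authors use it purely as a black-box anchor for the Lyapunov argument in Appendix A. Your sketch is therefore best compared against the proof in that reference, and at the architectural level it matches it: Choi and Schervish establish almost-sure consistency via an extension of Schwartz's theorem to independent, non-identically distributed observations, verifying (i) prior positivity of Kullback--Leibler-type neighborhoods of the true joint density and (ii) the existence of uniformly exponentially consistent tests over a sieve whose complement has exponentially small prior mass, with the bandwidth tail bound of assumption (v) controlling the sieve. Two cautions, though. First, your sieve is built from RKHS-norm balls with entropy controlled by eigenvalue decay; the cited proof instead takes sieves of functions with sup-norm bounds on the function and its first partial derivatives (using smoothness of the isotropic kernel to control sample-path derivatives), which is what makes the covering-number estimate elementary --- the RKHS route would additionally require showing the posterior concentrates on bounded RKHS balls, which is not automatic. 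Second, the KL support step needs more than ``the RKHS is rich enough'': sample paths of a GP are almost surely \emph{not} in the RKHS of its own kernel, so the positivity of prior mass on sup-norm neighborhoods of $f_0$ has to come from the support theorem for Gaussian measures (closure of the RKHS in the relevant topology), not from RKHS approximation alone. These are gaps in rigor rather than in strategy; as an outline of the cited proof your proposal is sound, but as written it does not discharge the two verifications that constitute essentially all of the work in \citep{choi2007posterior}.
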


This result forms the foundation of our Lyapunov-style analysis of our Bayesian learning algorithm (Algorithm \ref{alg:pog}). {Regarding the required conditions for Lemma \ref{lemma:posterior_consistency}: (i) the assumption that data is i.i.d. from a time-invariant distribution is standard in nonparametric statistics, and is satisfied whenever the underlying generating process is at steady state; (ii) this condition ensures that within a given set of positive measure there exists at least one sample, and holds for any generating process that has probability uniformly bounded away from zero; (iii) ensures that kernel matrices are positive definite, which is true so long as samples are observed without repetition and satisfy (ii); (iv) most covariance kernels take this form, such as the Gaussian, sinc, polynomial, among others; (v) imposes conditions on the rate of attenuation of kernel hyperparameters with the time-index, which is standard for establishing consistency in nonparametric regression \citep{barron1999consistency}.
 }
 
 {Before continuing, we define a looser notion of posterior consistency, which is that the difference between sequential marginal posterior distributions evaluated at sequential inputs $(\bbx_t, y_t)$.

\begin{definition}(Posterior Stationarity)
For fixed prior $\Pi$, a sequence of posterior distributions $\{\rho_t(\cdot) \}_{t\geq 0}$ is a posterior stationary point with respect to a metric $d$ over distributions if the distance between its subsequent marginals evaluated at the sequence $(\bbx_t, y_t)$ approaches null $d(\rho_t, \rho_{t-1}) \rightarrow 0$ with probability $1$, where the probability measure is the prior probability $\Pi$.
\end{definition}
} 
{The motivation and intuition of this definition is that it is a distributional analogue for sequential posterior inference to the notion of a stationary point in global (nonconvex) continuous optimization \citep{nesterov2013introductory} and locally stable equilibria in dynamical systems \citep{wiggins2003introduction}.}
To establish our main result, we additionally require the following assumption, which says that the per step change in the posterior likelihood is unchanged by compression. 

\begin{assumption}\label{as:unbiasedness} 
Define the events $\eta_t:=\{d_H(\rho_t,\rho_{t-1})<\gamma \given \ccalS_t \}$  and $\tilde{\eta}_t:=\{ d_H(\rho_{\tbD_t},\rho_{\bbD_{t-1}}) <\gamma \given \ccalS_t \}$ for any constant $\gamma>0$. The single step likelihood change with respect to the Gaussian prior (Sec. \ref{sec:prob}, paragraph 2) of the uncompressed posterior is at least as likely as the uncompressed single step likelihood change based upon sample point $(\bbx_t,y_t)$ is the same, i.e., 
 $$\mathbb{P}_{\Pi} \{\eta_t  \} \geq \mathbb{P}_{\Pi} \{ \tilde{\eta}_t   \}.$$
 where $\mathbb{P}_{\Pi}$ denotes the prior Gaussian likelihood in Sec. \ref{sec:prob}.
\end{assumption}

Assumption \ref{as:unbiasedness} is reasonable because the uncompressed and compressed updates observe the same sample $(\bbx_t ,\bby_t)$ at time $t$ and formulate conditional Gaussian likelihoods based upon them. While they are conditioned on different dictionaries $\ccalS_{t-1}$ and $\bbD_{t-1}$, the likelihood of the fully dense posterior is at least as likely as the sparse GP. In the analysis, Assumption \ref{as:unbiasedness} plays the role of a Bayesian analogue of nonexpansiveness of projection operators.
Under this assumption and the conditions of the aforementioned lemma, we can establish almost sure convergence under both diminishing and constant compression budget selections as stated next.

\begin{theorem}\label{theorem:pog_consistency}
Under the same conditions as Lemma \ref{lemma:posterior_consistency}, Algorithm \ref{alg:pog} attains the following {posterior stationarity} results almost surely: 
\begin{enumerate}
\item for decreasing compression budget  $\eps_t \rightarrow 0$, for any $\alpha >0$, as $t\rightarrow \infty$, we have $\mathbb{P}_{\Pi}\{ d_H(\rho_{\bbD_t},\rho_{\bbD_{t-1}})  < \alpha \given \ccalS_t \} \rightarrow 1$ w.r.t. population posterior $\mathbb{P}_0$.\label{theorem:diminishing}
\item for fixed budget $\eps_t = \eps > 0$, $\epsilon$-approximate convergence with respect to the Hellinger metric is attained, i.e., for any $\gamma >0$, as $t\rightarrow \infty$, $\mathbb{P}_{\Pi}\{ d_H(\rho_{\bbD_t},\rho_{\bbD_{t-1}})  < \gamma +\eps \given \ccalS_t \} \rightarrow 1$ w.r.t. the population posterior $\mathbb{P}_0$.\label{theorem:constant}
\end{enumerate}
\end{theorem}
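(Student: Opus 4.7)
The plan is to combine the posterior-contraction guarantee of Lemma \ref{lemma:posterior_consistency} for the dense iterates with the deterministic compression guarantee supplied by DHMP, and to propagate both bounds through the Hellinger triangle inequality, using Assumption \ref{as:unbiasedness} to cross from the dense-iterate prior-probability statement to the sparse-iterate prior-probability statement. Both parts (i) and (ii) fall out from the same chain of inequalities, differentiated only by whether the budget $\eps_t$ vanishes or is held fixed.

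First I would establish a one-step version of Lemma \ref{lemma:posterior_consistency}. The lemma gives, for any $\gamma>0$, $\mathbb{P}_\Pi\{d_H(\rho_t,\rho_0)<\gamma/2\given\ccalS_t\}\to 1$ almost surely w.r.t.\ $\mathbb{P}_0$. Applying the Hellinger triangle inequality,
\begin{equation*}
d_H(\rho_t,\rho_{t-1})\le d_H(\rho_t,\rho_0)+d_H(\rho_0,\rho_{t-1}),
\end{equation*}
both summands sit below $\gamma/2$ on an event of $\Pi$-probability tending to one, so $\mathbb{P}_\Pi\{\eta_t\}\to 1$ as $t\to\infty$.

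Second, I would transfer this to the compressed iterates. By its stopping rule, Algorithm \ref{alg:komp} always outputs $\tbD_{t+1}$ with $d_H(\rho_{\tbD_{t+1}},\rho_{t+1\given\bbD_t})\le\eps_t$, i.e.\ the compressed posterior lies inside an $\eps_t$-ball around the dense update from the previous dictionary. Chaining this with the triangle inequality
\begin{equation*}
d_H(\rho_{\bbD_t},\rho_{\bbD_{t-1}})\le d_H(\rho_{\bbD_t},\rho_{\tbD_t})+d_H(\rho_{\tbD_t},\rho_{\bbD_{t-1}}),
\end{equation*}
the first summand is controlled by the single-point append $\bbD_t=[\tbD_t,\bbx_t]$ (absorbed into the compression scale since one fresh sample contributes a perturbation commensurate with $\eps_{t-1}$), while the second is exactly the event $\tilde\eta_t$ of Assumption \ref{as:unbiasedness}. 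Using the stochastic comparison between $\eta_t$ and $\tilde\eta_t$ together with the DHMP budget, the preceding steps combine into
\begin{equation*}
\mathbb{P}_\Pi\{d_H(\rho_{\bbD_t},\rho_{\bbD_{t-1}})<\gamma+\eps_{t-1}\given\ccalS_t\}\to 1.
\end{equation*}
For part (i), $\eps_t\to 0$ kills the residual slack, so for any $\alpha>0$ one chooses $\gamma$ and $t$ large enough that $\gamma+\eps_{t-1}<\alpha$, giving the claim. For part (ii), $\eps_t\equiv\eps$ leaves a persistent $\eps$ deviation, yielding $\mathbb{P}_\Pi\{d_H(\rho_{\bbD_t},\rho_{\bbD_{t-1}})<\gamma+\eps\given\ccalS_t\}\to 1$ as stated.

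The hard part will be the bookkeeping that interleaves Bayesian updates with DHMP projections: the distributions $\rho_{t\given\bbD_{t-1}}$, $\rho_{\tbD_t}$, and $\rho_{\bbD_{t-1}}$ all sit on different dictionaries but the same data stream, and one has to show that Assumption \ref{as:unbiasedness} delivers the correct stochastic comparison at each link of the chain rather than at only one iterate. In particular, the one-point dictionary append must be shown to be harmless relative to the compression scale, so that the chain closes with a clean additive $\eps_{t-1}$ slack rather than a term that accumulates with $t$; this is precisely the work that converts a deterministic Hellinger inequality into the prior-probability statement demanded by the theorem.
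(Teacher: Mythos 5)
Your proposal follows essentially the same route as the paper's proof: a one-step consistency statement for the dense iterates (the paper's Lemma \ref{lemma:lyapunov}), the Hellinger triangle inequality splitting $d_H(\rho_{\bbD_t},\rho_{\bbD_{t-1}})$ into a compression term bounded by the DHMP budget and a remainder handled by Assumption \ref{as:unbiasedness}, and finally Lemma \ref{lemma:posterior_consistency} to drive the prior probability to one, with the diminishing versus constant budget cases differing only in whether the $\eps_t$ slack vanishes. The one cosmetic difference is in the one-step lemma, where you intersect the two high-probability events $\{d_H(\rho_t,\rho_0)<\gamma/2\}$ and $\{d_H(\rho_{t-1},\rho_0)<\gamma/2\}$ while the paper invokes monotonicity of the posterior contraction to reduce to a single event; both are valid, and the bookkeeping concern you raise about the one-point dictionary append is present in, and left equally informal by, the paper's own argument.
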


\begin{proof} See Appendix \ref{sec:apx_thm1}. \end{proof}

Theorem \ref{theorem:pog_consistency} establishes a formal tradeoff between the choice of compression budget and the accuracy with which we are able to lock onto the population posterior distribution. Specifically, for attenuating compression budget, the algorithm retains more and more sample points as time progresses, such that in the limit it exactly {converges to stationarity}. On the other hand, for constant compression budget, we can converge to a neighborhood of {stationarity}, but for this selection we can additionally guarantee that the complexity of the distribution's parameterization never grows out of control. This finite memory property is formalized in the following theorem.

\begin{theorem}\label{theorem:pog_parsimony}
Suppose Algorithm \ref{alg:pog} is run with constant compression budget $\eps>0$. Then the model order $M_t$ of the posterior distributions $\rho_{\bbD_t}$ remains finite for all $t$, and subsequently the limiting distribution $\rho_{\infty}$ has finite model complexity $M^\infty$. Moreover, $M_t \leq M^{\infty}$ for all $t$. More specifically, we have the following relationship between the model complexity, the compression budget, and the parameter dimension $p$:
$$M_t\leq \ccalO\left(\frac{1}{\epsilon}\right)^p \text{ for all } t$$
\end{theorem}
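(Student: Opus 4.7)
The plan is to derive the bound by combining a sensitivity analysis of the Hellinger distance under atom removal with a standard packing-number (metric entropy) argument on the feature hypercube $\ccalX = [0,1]^p$ inherited from Lemma \ref{lemma:posterior_consistency}.

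First, I would read the termination condition of DHMP (Algorithm \ref{alg:komp}) in the contrapositive: an atom $\bbd_j$ is retained in $\bbD_t$ only if every single-element removal would induce a Hellinger distance strictly exceeding the budget $\eps$ from the uncompressed posterior $\rho_{\bbD_t}$. Hence, to control $M_t$, it is enough to show that the atoms of the surviving dictionary are pairwise well-separated in $\ccalX$; if two were too close, at least one removal would stay within budget, contradicting termination.

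The central technical step would then be a sensitivity lemma: there exists $\delta(\eps) = \Theta(\eps)$, with the constants depending only on the noise variance $\sigma^2$ and on the kernel bandwidth $\beta$ (which is uniformly controlled by Assumption \ref{as:kernel_hyperparameter}), such that $\|\bbd_i - \bbd_j\| \leq \delta(\eps)$ implies $d_H(\rho_{\bbD_t \setminus \{\bbd_j\}}, \rho_{\bbD_t}) \leq \eps$. I would establish this by viewing deletion of $\bbd_j$ as a rank-one downdate of $(\bbK_{\bbD_t,\bbD_t} + \sigma^2 \bbI)^{-1}$. Since $\kappa(\beta \|\cdot\|)$ is radial and smooth (Assumption \ref{as:kernel_form}), closeness of $\bbd_i$ to $\bbd_j$ makes the corresponding columns of $\bbK_{\bbD_t,\bbD_t}$ a small perturbation of one another; combined with the uniform lower bound on the spectrum from Assumption \ref{as:positive_definite}, a Sherman--Morrison argument bounds the induced perturbations of $\bbmu_{t+1\given\bbD_t}$ and $\bbSigma_{t+1\given\bbD_t}$ by $\ccalO(\|\bbd_i - \bbd_j\|)$. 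Substituting into \eqref{eq:hellinger_gaussian} and Taylor-expanding the determinantal and exponential factors around the unperturbed parameters yields $d_H^2 \leq c(\beta,\sigma^2)\,\|\bbd_i - \bbd_j\|^2$, hence a linear dependence $\delta(\eps) \propto \eps$.

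Given the separation property, the atoms of $\bbD_t$ form a $\delta(\eps)$-separated subset of $[0,1]^p$. The classical metric-entropy bound for the unit hypercube caps the size of any such packing by $\ccalO((1/\delta)^p)$; substituting $\delta=\Theta(\eps)$ delivers $M_t \leq \ccalO((1/\eps)^p)$ uniformly in $t$. Taking $M^\infty$ to be this packing cardinality (equivalently, $\sup_t M_t$) yields $M_t \leq M^\infty < \infty$ and, as a by-product, that the limit dictionary $\bbD_\infty$ has the same order bound. The principal obstacle will be making the sensitivity lemma sharp enough to give the clean linear rate: the Hellinger formula depends nonlinearly on $\bbSigma$ through the determinant and the inverse of $\bar{\bbSigma}=(\Sigma_1+\Sigma_2)/2$, so the Taylor argument requires simultaneously controlling the spectrum of $\bar{\bbSigma}$ away from zero and the displacement $\bbmu_1-\bbmu_2$; it is precisely Assumptions \ref{as:positive_definite}--\ref{as:kernel_hyperparameter} of Lemma \ref{lemma:posterior_consistency} that keep these constants finite and uniform in $t$.
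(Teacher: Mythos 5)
Your overall architecture is the same as the paper's: show that the retained dictionary atoms must be pairwise separated, then invoke compactness of the feature space and a packing/covering-number bound to cap the dictionary size at $\ccalO(1/\eps)^p$. The paper phrases the separation in the RKHS image $\kappa(\ccalX,\cdot)$ and quotes an external covering-number bound for Lipschitz Mercer kernels; you phrase it directly in $[0,1]^p$, which is equivalent up to the kernel's Lipschitz constant. The one place you genuinely go beyond the paper is that you attempt to \emph{prove} the sensitivity step --- that geometric closeness of atoms forces a small Hellinger perturbation upon deletion --- whereas the paper simply asserts that the mean and covariance differences are ``completely determined'' by the subspace distance.

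However, the sensitivity lemma as you state it has a genuine gap. Deleting $\bbd_j$ from the dictionary does not only perturb the kernel matrix; it also removes the associated target from the vector $\bby$ entering $\bbmu_{t+1\given \bbD}$. Take $\bbd_i=\bbd_j$ exactly but with distinct targets $y_i\neq y_j$: the kernel columns coincide, yet after deletion the posterior mean shifts by an amount of order $|y_i-y_j|$, which is governed by the observation noise rather than by $\|\bbd_i-\bbd_j\|=0$. So no bound of the form $d_H\leq c(\beta,\sigma^2)\,\|\bbd_i-\bbd_j\|$ can hold without additionally controlling the targets (or redefining deletion so that the surviving near-duplicate atom absorbs the deleted observation). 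Your Sherman--Morrison/Taylor plan does handle the covariance, which is target-free, but not the mean, and the Hellinger formula \eqref{eq:hellinger_gaussian} depends on both. A second, smaller issue: the DHMP retention test compares $\rho_{\bbD_{-j}}$ to the \emph{reference} posterior $\rho_{\tbD}$, not to the current pruned posterior $\rho_{\bbD}$, and at termination $d_H(\rho_{\bbD},\rho_{\tbD})$ may already be arbitrarily close to $\eps$; the triangle inequality then only yields separation of the survivors at scale $\eps-d_H(\rho_{\bbD},\rho_{\tbD})$, which is not bounded below, so pairwise $\delta(\eps)$-separation does not follow directly from the termination rule. To be fair, the paper's own proof resolves neither point --- it asserts the key implication --- so your instinct to supply the missing lemma is the right one; it just needs to be a statement that is actually true.
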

 
 \begin{proof} See Appendix \ref{sec:apx_thm2}. \end{proof}
 
Theorem \ref{theorem:pog_parsimony} establishes a unique result for approximate GPs, namely, that the attainment of approximate {stationarity} comes with the salient feature that the posterior admits a parsimonious representation. In the worst case, the model complexity depends on the metric entropy of the feature space $\ccalX$, rather than growing unbounded with the iteration index $t$. The combination of Theorems \ref{theorem:pog_consistency}\ref{theorem:constant} and \ref{theorem:pog_parsimony} establish a rate distortion theorem for GPs over compact features spaces {akin to classically related ideas from information theory \citep{cover1999elements}}. In the subsequent section we empirically validate the aforementioned theoretical results on real data sets.

{Extending these results to the use of inducing inputs online is technically challenging due to the temporal correlation they introduce in the observation sequence. That is, when we move away from the kernel dictionary elements being defined differently from the training examples themselves, i.e., via inducing input search \citep{snelson2006sparse,titsias2009variational,bui2017streaming},  i.i.d. assumptions on the observation sequence $(\bbx_t,y_t)$ break down. Unfortunately, most posterior consistency results break down as well for this case. This is why \citep{burt2019rates} focuses on sparse variational GPs when inducing inputs are \emph{fixed}. 

Efforts to go beyond i.i.d. data do exist: when the rate of correlation, i.e., mixing rates \citep{yu1994rates}, decreases \emph{exponentially fast}, consistency of the mean of the GP is possible. More specifically, by exploiting the link between the mean of a GP to kernel ridge regression \citep{van2008reproducing}, and convergence rates for kernel ridge regression on mixing sequences \citep{xu2008learning,steinwart2009fast}, it may be possible to derive conditions on how much inducing inputs are permitted to move per update, which under vanilla GP regression do not move at all. However, the magnitude of their change must be proportional to the mixing rate by suitably specifying the step-size. Unfortunately, this step-size likely must decrease \emph{exponentially fast.} This means that asymptotically (i) the data is i.i.d. and (ii) there are no updates to inducing inputs. For these reasons, the theoretical study of online GP regression with inducing inputs is an open problem.}

{\section{Numerical Validation}\label{sec:pog_consisitency}
In this section, we numerically validate the consistency of POG (Algorithm \ref{alg:pog}). We numerically and statistically observe the fact that the posterior associated with POG is consistent, i.e., that approximates the true distribution. To do so, we consider two examples: the observed data sequence is generated from: (i) Gaussian distribution; (ii) Non-central Chi-square distribution. Next, we present the results related to the above two cases.

\subsubsection{Gaussian distribution}
We begin by considering the simple setting of fixed prior on the feature vector, $\bbx \sim \ccalN( \bb0,\bbK_N)$ and linear observation model $y=\bba^T \bbx + b$, where $\bba$ and $b$ are constants. This implies that $y$ is Gaussian distributed with time-invariant mean $b$ and variance $\bba^T\bba$. We predict the mean and covariance of the sampled data and show that POG follows closely the distribution of the training data. We consider a sequence of  $N=2200$ training examples according to this observation model, run hyperparameter optimization  on a validation set of size $200$ using MATLAB's inbuilt function (``fmincon''), and fix the compression budget as $\eps_t=10^{-4}$. For the experiments, the dimension of the feature vector is set at $10$, and the constant $\bba$ is a $10$ dimensional vector with each value set at $0.5$, and the constant $b$ is taken as $2$.  

 Since the resulting data is Gaussian distributed, we can validate the consistency of the posterior inference as compared with the ground truth using a $z-test$ \citep{sprinthall1990basic,casella2002statistical} at significance level $\alpha=0.05$.
The posterior parameters of POG yielded a $z-test$ whose $p$-value is near 0, which establishes that we may be at least $95\%$ confident of  our claim that POG adheres to the true posterior. This $z-test$  was carried out was carried on the predicted mean of last $100$ samples of training data. The probability that the hypothesis is satisfied obtained from the test is $0.9250.$

The same can be also be validated from the plots of Fig. ~\ref{fig:pog_consistency_lin}. From Fig.~\ref{fig:hist_train_lin}, it can be observed that the histogram of posterior mean obtained from the POG algorithm very closely matches with the histogram of the training data. Thus substantiating our claim that POG closely follows the distribution of the training data, which in this case is Gaussian distribution. In Fig. ~\ref{fig:Mean_comparison_lin}, we compare the mean of the training samples obtained from the POG algorithm with the actual mean of the training samples, and it can be observed that POG closely approximates the mean. Similarly, we compare the variances in Fig. ~\ref{fig:var_comaprison_lin}. The above performance is observed with a final settled model order complexity of 127 verifying  the finite model order growth property of POG algorithm. To average out the noisy estimates of mean and variance and for better interpretability of results, we have considered the moving average of 200 samples. Since, the point estimates are noisy, thus they are greyed out in Fig.~\ref{fig:Mean_comparison_lin} and Fig.~\ref{fig:var_comaprison_lin}.

\begin{figure*}[t!]
\begin{subfigure}[t]{0.33\linewidth}
\centering
\includegraphics[width=\linewidth,height=0.8\linewidth]{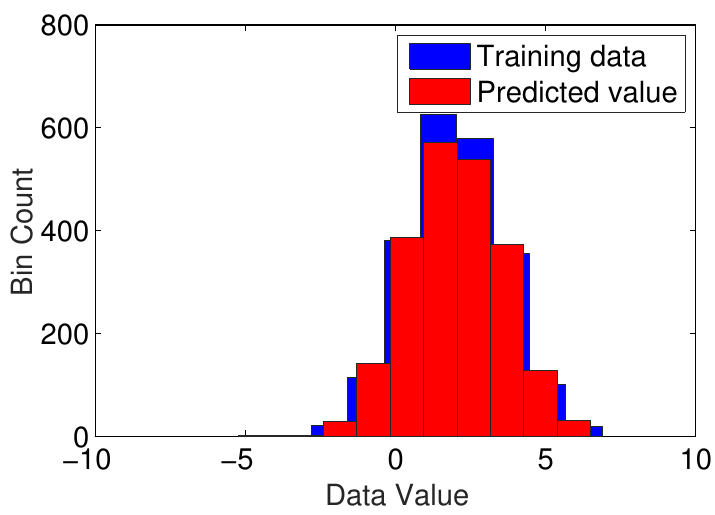}
\caption{}
\label{fig:hist_train_lin}
\end{subfigure}
\begin{subfigure}[t]{0.33\linewidth}
\centering
\includegraphics[width=\linewidth,height=0.8\linewidth]{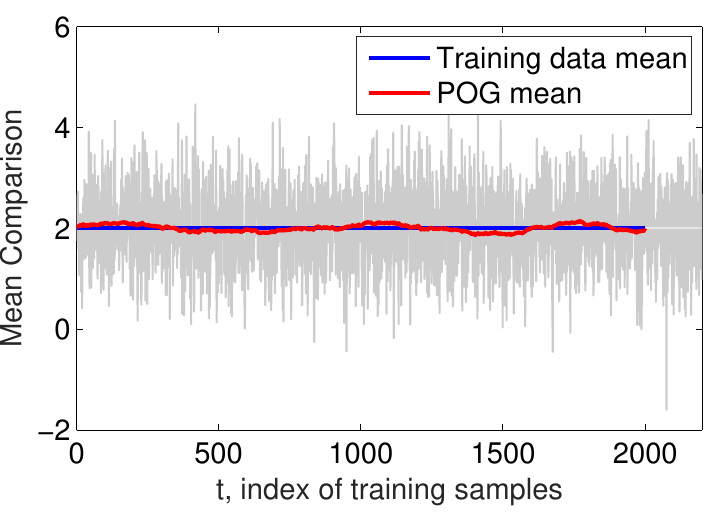}
\caption{}
\label{fig:Mean_comparison_lin}
\end{subfigure}
\begin{subfigure}[t]{0.33\linewidth}
\centering
\includegraphics[width=\linewidth,height=0.8\linewidth]
{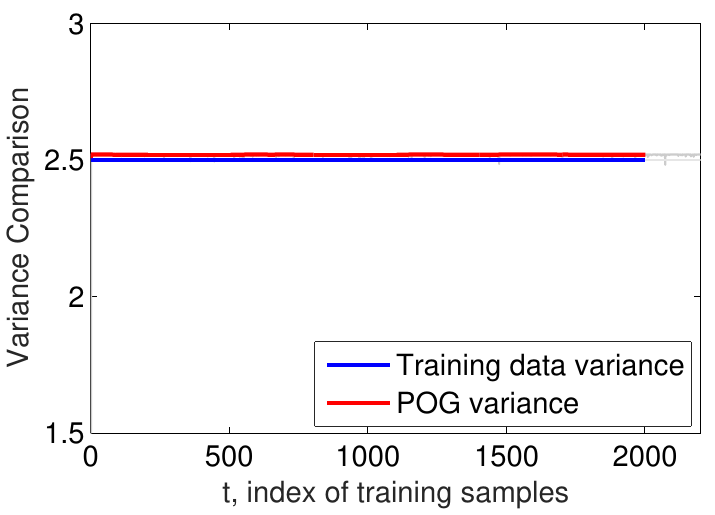}
\caption{}
\label{fig:var_comaprison_lin}
\end{subfigure}
\vspace*{-0.2cm}
\caption{Performance of POG on a Gaussian distributed data set. POG very closely follows the mean and variance of the distribution of the data set, i.e., Gaussian distribution.  \vspace{-5mm}}
\label{fig:pog_consistency_lin}
\end{figure*}

\subsubsection{Chi-square distribution}
Next, we evaluate the consistency of POG algorithm considering data obtained from a chi-square distribution with degree of freedom $1$. The feature vector, $x_t \sim \ccalN(\omega t^{-k},1)$, where $\omega$ is a constant. The target variable is generated as $y_t=x_t^2$. Thus the true distribution of $y_t$ is a non-central Chi-squared distribution with one degree of freedom, and mean $1+(\omega t^{-k})^2$ and variance $2[1 + 2(\omega t^{-k})^2]$. For our experiments, we have considered $\omega=1$, and $k=1/10$, a training data set of size $2200$. For hyperparameter selection, we have considered a validation set of $200$ samples for hyperparameter optimization via MATLAB's inbuilt function ("fmincon"). The compression budget is set at $\eps_t=10^{-5}$. 

Similar to the previous subsection, here also we get equivalent results verifying the consistency of POG algorithm. In Fig. ~\ref{fig:hist_train_chi_2}, it can be clearly observed that POG very well approximates the training data distribution. Here also, we have considered the moving average of 200 samples and have greyed out the noisy point estimates. From Fig. ~\ref{fig:Mean_comparison_chi_2} and Fig. ~\ref{fig:var_comaprison_chi_2} it can be observed that POG very closely follows the mean and variance of the true distribution. Thus validating the consistency of the POG algorithm in approximating the true distribution.

\begin{figure*}[t!]
\begin{subfigure}[t]{0.33\linewidth}
\centering
\includegraphics[width=\linewidth,height=0.8\linewidth]{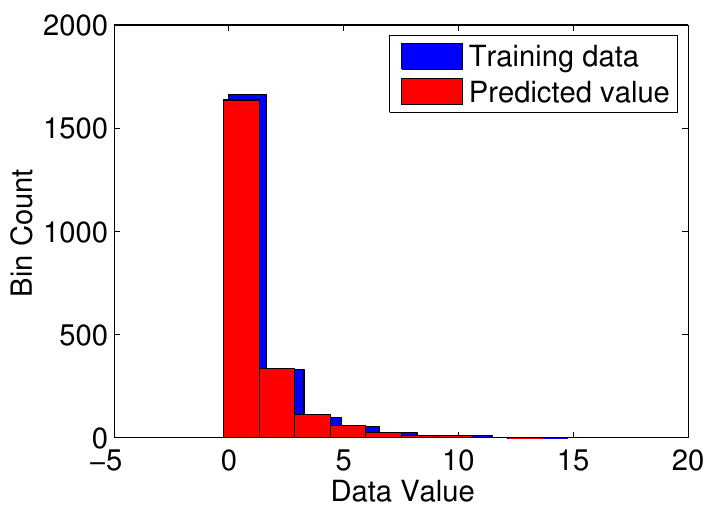}
\caption{Histogram of training data}
\label{fig:hist_train_chi_2}
\end{subfigure}
\begin{subfigure}[t]{0.33\linewidth}
\centering
\includegraphics[width=\linewidth,height=0.8\linewidth]{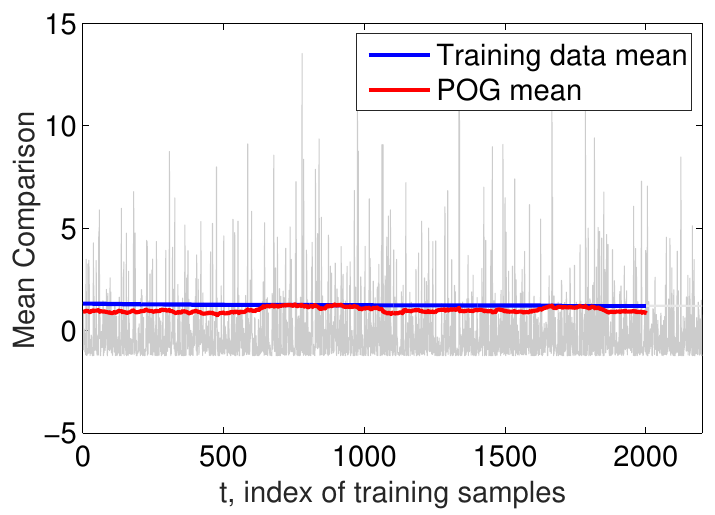}
\caption{}
\label{fig:Mean_comparison_chi_2}
\end{subfigure}
\begin{subfigure}[t]{0.33\linewidth}
\centering
\includegraphics[width=\linewidth,height=0.8\linewidth]
{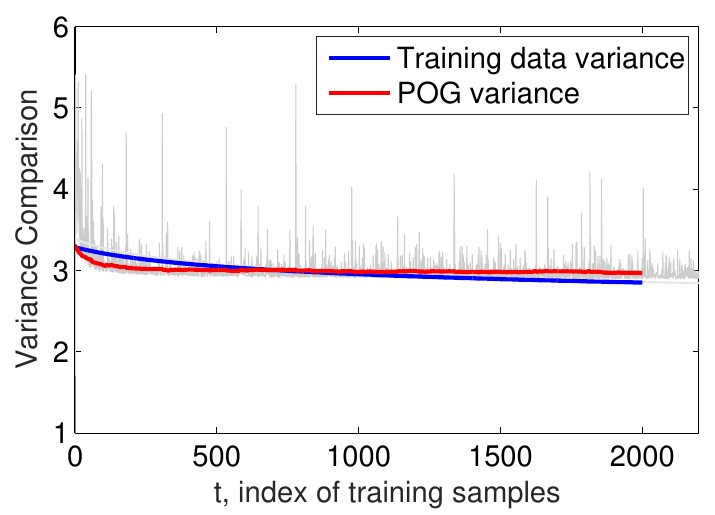}
\caption{}
\label{fig:var_comaprison_chi_2}
\end{subfigure}
\vspace*{-0.2cm}
\caption{Performance of POG on non-central Chi-squared distributed (one degree of freedom) data set where the feature vector,  $x_t \sim \ccalN(\omega t^{-1/10},1)$. POG tracks the variation of the data set by closely approximating the mean and variance.\vspace{-5mm}}
\label{fig:pog_consistency_chi_2}
\end{figure*}
}


\section{Experiments}\label{sec:experiments}
In this section, we compare the performance of POG with standard online and offline approximations of GP: Sparse Online Gaussian Processes (SOGP)\citep{csato2002sparse}, which fixes the subspace dimension a priori, and an offline fixed-dimension approach which additionally does a gradient-based search over the space of training examples dubbed pseudo-input search (Pseudo-Input Gaussian Processes, or SPGP) \citep{snelson2006sparse}. Along with the above three algorithms, we also consider the performance of the Dense GP, which stores all training samples and does no compression. For our experiments, the Dense GP is implemented in an online fashion for the purpose of visualization, but in practice this is impossible as when the size of training set becomes large. Thus, we report the limiting test error as an offline approach in Table \ref{table:algorithm comparison}.

 To ensure competitiveness of each approach, we conduct hyperparameter optimization at the outset over a number of different bandwidth selections using MATLAB's inbuilt function ("fmincon") for POG and Dense GP, whereas for SOGP and SPGP we employ their in-built hyper-parameter selection schemes { on respective validation sets of size $300$ (for kin40k data), $150$ (for abalone data) and $50$ (for boston housing data), which are then held constant for the whole duration of training.}
  We note that since the input dimension defines a covariance component for each element of the kernel, rather than use a single kernel bandwidth, we tune a diagonal of matrix of bandwidth parameters often referred to in the  literature as ``automatic relevance determination." These parameters are tuned over a randomly selected small subset of the training samples. An analogous procedure was used to tune the noise prior in experiments.

{Online augmentations of pseudo-input approximations exist \citep{snelson2006sparse} which \emph{adapt} inducing-points according the variational approximation to the GP likelihood, as in \citep{titsias2009variational,bauer2016understanding,bui2017streaming}, when their total the number is a priori fixed. Doing so, however, designs the GP posterior along a separate criterion than that which is the emphasis of this work. In particular, the goal of this section is to rigorously analyze the role of \emph{point selection}, \emph{not} point adaptation.}

For our experiments, we consider the Gaussian kernel
with varied length-scale, i.e., $\kappa(\bbx_m,\bbx_n)=$\\ $a\exp\Big[-\frac{1}{2}\sum_{i=1}^p\frac{(\bbx_m^{(i)}-\bbx_n^{(i)})^2}{{\bbq^{(i)}}^2}\Big]$ with $\theta=\{a,\bbq\}$ as hyperparameters, where the $i$th superscript in the variable denotes the $i$th component of the vector.

  
  POG being an online algorithm, for every training instant $t$, we run the Algorithm \ref{alg:pog}  and use the dictionary $\bbD_t$ obtained from the compression Algorithm \ref{alg:komp} to evaluate the following measures: the standardized mean squared error (SMSE) and the mean standardized log loss (MSLL), or  on the test data set using \eqref{eq:GP_posterior_D}, i.e.,
  \begin{align}\label{eq:GP_posterior_test_data}
\bbmu^{test}_{i\given \bbD_{t} }&=\bbk_{\bbD_{t}}(\bbx^{test}_{i})[\bbK_{\bbD_{t},\bbD_{t}}+ \sigma^2 \bbI]^{-1}\bby_t  \\
		\bbSigma^{test}_{i\given \bbD_{t}}&\!\!=\!\kappa(\bbx^{test}_{i},\bbx^{test}_{i}) \nonumber\\
		 &~~- \bbk^T_{\bbD_{t}}\!(\!\bbx^{test}_{i}\!) [\bbK_{\bbD_{t},\bbD_{t}}\!\!+\! \sigma^2 \bbI]^{-1}\bbk_{\bbD_{t}}(\bbx^{test}_{i})+\sigma^2, \nonumber
\end{align}
 where $i$ denotes the index of test data samples and $\bbx^{test}_{i}$ is the $i$th sample from the test data set.  The index $i$ varies from $i=1,\dots,N_{test}$, where $N_{test}$ is the total number of samples in the test data set. The specific way SMSE and MSLL are calculated on the test  set for every training index  $t$ is given below:
  \begin{align}\label{eq:smse_msll}
	 SMSE &=  \frac{1}{N_{test}}\sum_{i=1}^{N_{test}}\frac{\Big(y^{test}_{i}-\bbmu^{test}_{i\given \bbD_{t} }\Big)^2}{var[\bby]} \\
	\!\! MSLL&\!= \frac{1}{2N_{test}}\!\!\sum_{i=1}^{N_{test}}\!\!\!\Bigg(\!\frac{\Big(y^{test}_{i}\!-\!\bbmu^{test}_{i\given \bbD_{t} }\Big)^2}{\bbSigma^{test}_{i\given \bbD_{t}}}\! + \! \log\!\Big(|\bbSigma^{test}_{i\given \bbD_{t}}|\Big)\!\!\!\Bigg),\nonumber
	\end{align}
	where $var[\bby]$ is the variance of the training data and $y^{test}_{i}$ is the actual test value. Observe that MSLL is just the negative log likelihood with a constant of $2\pi$ omitted, since this term does not reflect the accuracy of mean and covariance estimates. Thus, smaller values of MSLL are better in the sense of maximum a posteriori estimation.   We next use SMSE and MSLL for the performance evaluation of the algorithms on the Boston \citep{harrison1978hedonic}, kin40k data \citep{seeger2003fast}, and Abalone \citep{nash1994population} data sets, to faithfully compare against benchmarks that appeared early in this literature, i.e., \citep{seeger2003fast}.
  
For fair comparison of POG with SOGP and SPGP, we have kept the model order (total number of dictionary points) of all the three algorithms to be same. However, this restriction is infeasible for the Dense GP whose model order grows by one with every training sample. The performance of the Dense GP stands as a performance benchmark.  The exact value of the model order differs for each data set and is reported in {Table \ref{table:model_time_complexity}} {and  the performance in terms of \eqref{eq:smse_msll} is presented in Table. \ref{table:algorithm comparison}}. The values for SMSE and MSLL are obtained by computing their average over the test data set for the last $100$ samples of training set. We categorize approaches as offline and online based upon the tractability of computing the GP posterior on the fly. {The run time of an algorithm is measured using the MATLAB's inbuilt function (``tic'' and ``toc'') and is reported in Table. \ref{table:model_time_complexity}. The average performances and one-standard error bars were obtained by repeating the experiment $7$ times and the same can be observed in the experimental results.}

\begin{table*}[]
\centering
\renewcommand\tabcolsep{8pt}
\caption{Comparison of online (POG, SOGP) and offline (Dense GP and SPGP) approaches on Boston, Abalone and kin40k data set. Performance is quantified in terms of SMSE and MSLL \eqref{eq:smse_msll}. Best performance attained under both online and offline settings is presented in boldface and the number of training samples and test samples are denoted by $N_{trn}$ and $N_{test}$, respectively. For ranking performance, we consider the best SMSE and MSLL achieved under a given model fitness. Observe POG consistently attains superior performance in the online setting, and is often comparable to offline benchmarks.}\label{table:algorithm comparison}
\begin{tabular}{|c|c|c|c|c|c|}
\hline
\multirow{2}{*}{Data set} & \multirow{2}{*}{$N_{trn}/N_{test}$}  & \multicolumn{4}{c|}{Performance of algorithms (SMSE / MSLL)} \\ \cline{3-6}
 &                                                                                 &                                                                          \multicolumn{2}{c|}{Offline}  & \multicolumn{2}{c|}{Online} \\ \cline{3-6}    & &\multirow{2}{*}{\begin{tabular}[c]{@{}c@{}}Dense GP  \end{tabular}}& \multirow{2}{*}{\begin{tabular}[c]{@{}c@{}}Offline Benchmark \\ (SPGP) \end{tabular}}   & POG  & SOGP  \\[10pt] \hline
Boston                    & 455                                                                               /51                                                                                & 0.0989/0.3364/& \bf{0.0833/0.1691}                &    \bf{0.2590/0.6323}         & 0.4629/2.4241  \\[3pt] \hline
Abalone                   & 3133                                                                              /1044                                                                             & 0.4078/2.1529 &   \bf{0.3865
/2.0083}                   &  \bf{0.4324/2.2032}
          & 0.4839/357.4717 \\[3pt] \hline
kin40k                    & 4000                                                                               /200                                                                              & 0.0686/0.0694&  \bf{0.1388/0.4104}       &    \bf{0.1943/0.5620}       & 0.8131/30.5652   \\[3pt] \hline
\end{tabular}
\end{table*}

\begin{table*}[]
\centering
\renewcommand\tabcolsep{8pt}
\caption{{Comparison of Model and Time coplexity on online (POG, SOGP) and offline (Dense GP and SPGP) approaches on Boston, Abalone and kin40k data set. The model complexity is quantified in terms of the model order (M). The time complexity here denotes the total run time of a full training epoch and average time for a single incremental training update in seconds. Since SPGP is an offline algorithm thus the incremental update time is not available. The per-update runtime for POG is slower than SOGP, but faster than the dense GP or SPGP. This is a justifiable tradeoff in terms of the increased accuracy obtained by POG as depicted in Table \ref{table:algorithm comparison}.} }\label{table:model_time_complexity}
{\begin{tabular}{|c|c|c|c|c|}
\hline
\multirow{2}{*}{Data set} & \multicolumn{4}{c|}{ Model complexity (M)/ Time complexity: full training epoch (sec)/ Single incremental update (sec)} \\ \cline{2-5}
 &                                                                                                                                                       \multicolumn{2}{c|}{Offline}  & \multicolumn{2}{c|}{Online} \\ \cline{2-5}     &\multirow{2}{*}{\begin{tabular}[c]{@{}c@{}}Dense GP  \end{tabular}}& \multirow{2}{*}{\begin{tabular}[c]{@{}c@{}}Offline Benchmark \\ (SPGP) \end{tabular}}   & POG  & SOGP  \\[10pt] \hline
Boston                     & 405/{97.48}/0.2407& {83/6.72/-}                &    {83/{54.89}/0.1355
}         & 83/{12.06}/0.0298  \\[3pt] \hline
Abalone       & 2983/{$3.9373\times10^5$}/  131.9913 &   {394/94.32/-}                   &  {394/{$1.5289\times 10^4$}/5.1254}
          & 394/{412.57}/0.1383    \\[3pt] \hline
kin40k               & 3700/{$2.2464\times10^5$}/60.7135 &  {392/104.96/-}       &    {392/{$6.4209\times 10^3$}/1.7354}       & 392/{391.34}/ 0.1058   \\[3pt] \hline
\end{tabular}}
\end{table*}

\begin{figure*}[t!]
\begin{subfigure}[t]{0.49\linewidth}
\centering
\includegraphics[width=\linewidth,height=0.8\linewidth]{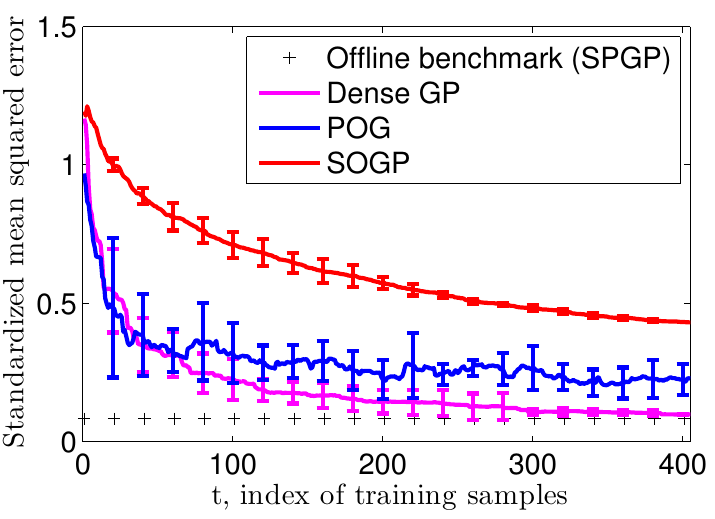}
\caption{}
\label{fig:boston_smse}
\end{subfigure}
\begin{subfigure}[t]{0.49\linewidth}
\centering
\includegraphics[width=\linewidth,height=0.8\linewidth]
{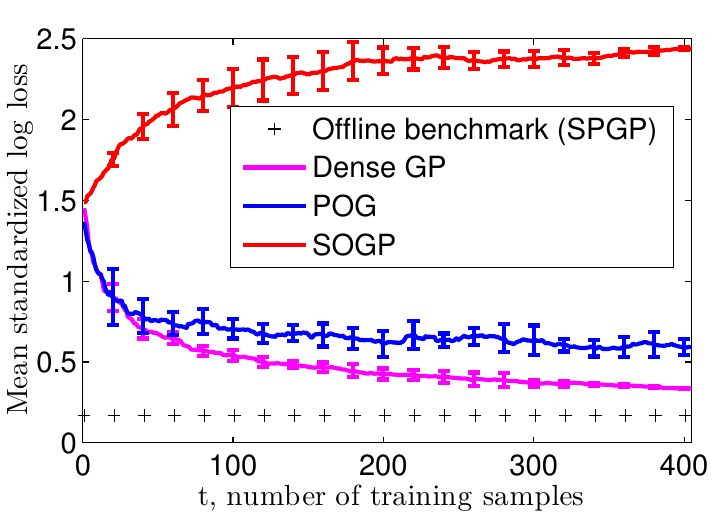}
\caption{}
\label{fig:boston_msll}
\end{subfigure}
\begin{subfigure}[t]{0.49\linewidth}
\centering
\includegraphics[width=\linewidth,height=0.8\linewidth]{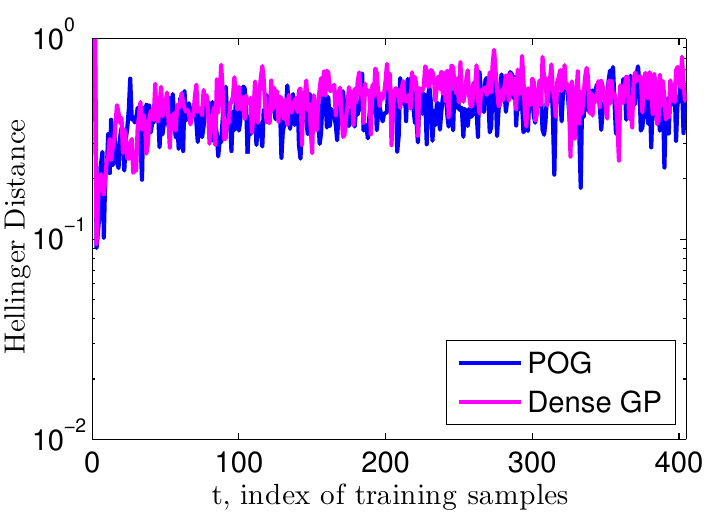}
\caption{}
\label{fig:hellinger_boston}
\end{subfigure}
\begin{subfigure}[t]{0.49\linewidth}
\centering
\includegraphics[width=\linewidth,height=0.8\linewidth]
{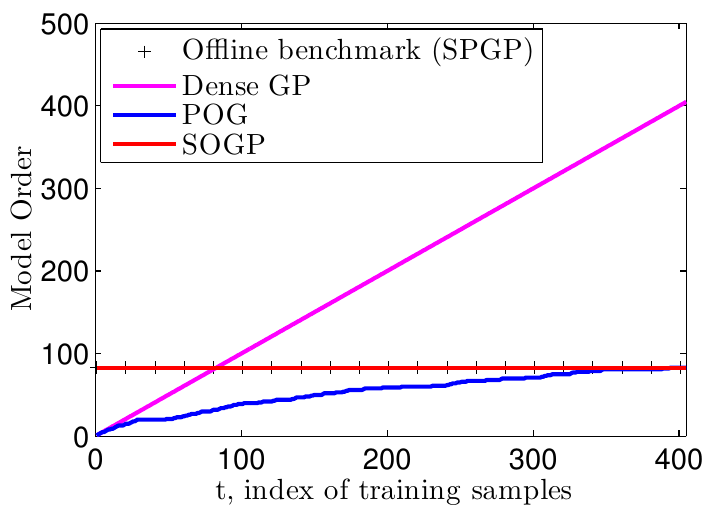}
\caption{}
\label{fig:mo_boston}
\end{subfigure}
\vspace*{-0.2cm}
\caption{Performance of POG, SOGP \citep{csato2002sparse}, SPGP \citep{snelson2006sparse} and Dense GP on \textit{boston} data set. POG outperforms SOGP and yields comparable performance to Dense GP and SPGP, while being able to operate online with a dynamic model order. The limiting model order of POG is independent of the training sample size.\vspace{-5mm}}
\label{fig:boston}
\end{figure*}


\begin{figure*}[t]
\begin{subfigure}[t]{0.49\linewidth}
\centering
\includegraphics[width=\linewidth,height=0.8\linewidth]{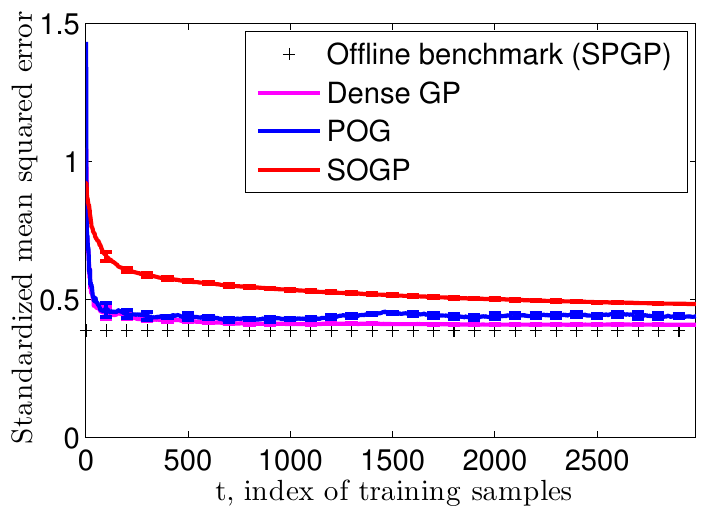}
\caption{}
\label{fig:abalone_smse}
\end{subfigure}
\begin{subfigure}[t]{0.49\linewidth}
\centering
\includegraphics[width=\linewidth,height=0.8\linewidth]
{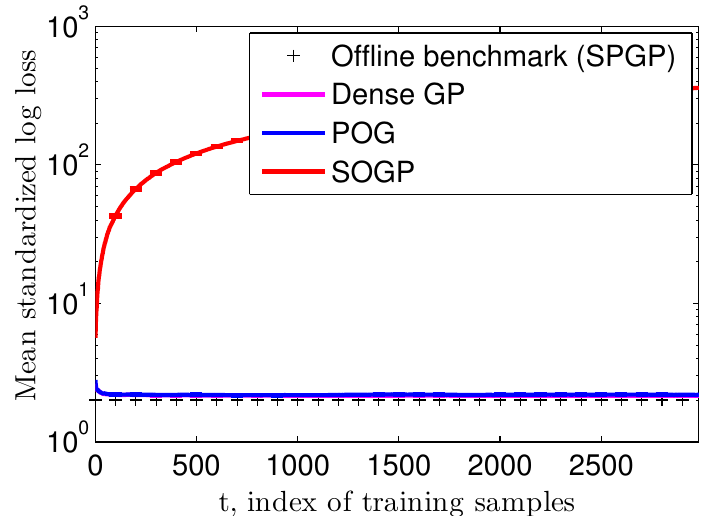}
\caption{}
\label{fig:abalone_msll}
\end{subfigure}

\begin{subfigure}[t]{0.49\linewidth}
\centering
\includegraphics[width=\linewidth,height=0.8\linewidth]{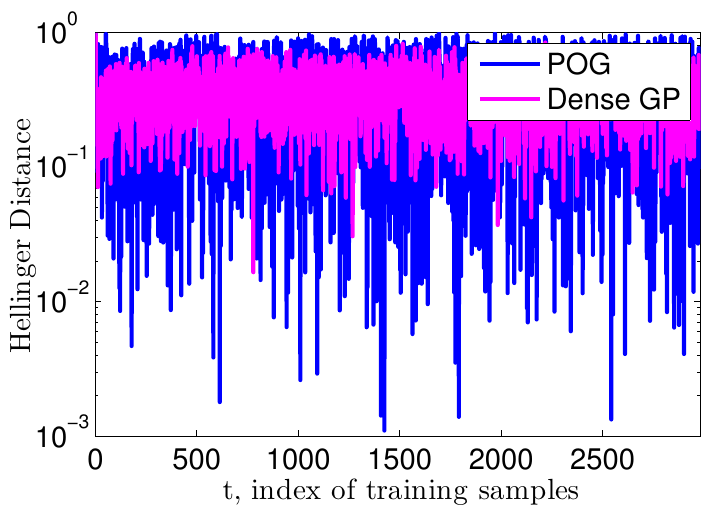}
\caption{}
\label{fig:hellinger_abalone}
\end{subfigure}
\begin{subfigure}[t]{0.49\linewidth}
\centering
\includegraphics[width=\linewidth,height=0.8\linewidth]
{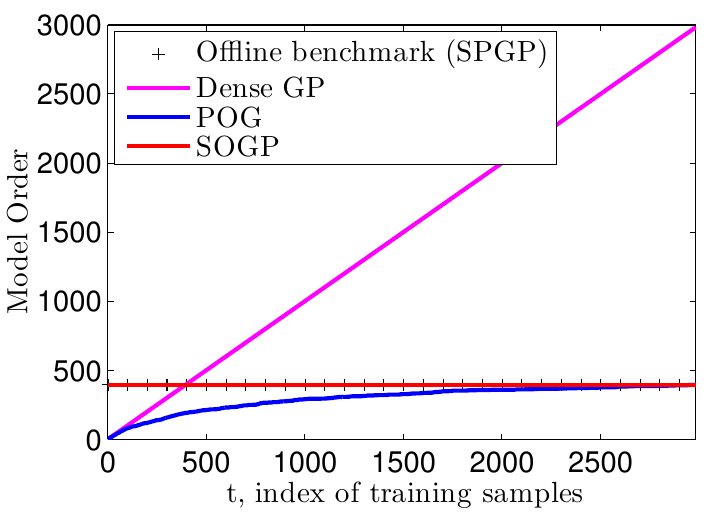}
\caption{}
\label{fig:mo_abalone}
\end{subfigure}
\vspace*{-0.2cm}
\caption{Performance of POG, SOGP \citep{csato2002sparse}, SPGP \citep{snelson2006sparse} and Dense GP on \textit{abalone} data set. POG  outperforms SOGP and yields performance close to Dense GP with significantly fewer model points than the Dense GP. Moreover, POG performs comparable to  SPGP which employs a complicated offline hyper-parameter search.\vspace{-5mm}}
\label{fig:abalone}
\end{figure*}

\begin{figure*}[t]
\begin{subfigure}[b]{0.49\linewidth}
\centering
\includegraphics[width=\linewidth,height=0.8\linewidth]{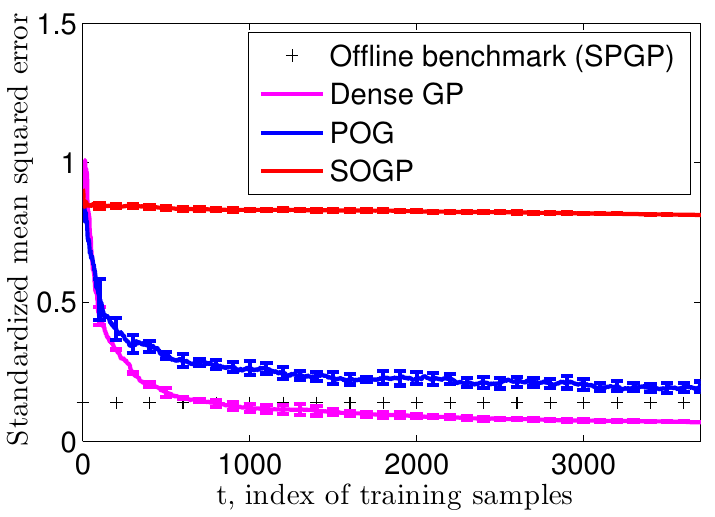}
\caption{}
\label{fig:kin_smse}
\end{subfigure}
\begin{subfigure}[b]{0.49\linewidth}
\centering
\includegraphics[width=\linewidth,height=0.8\linewidth]
{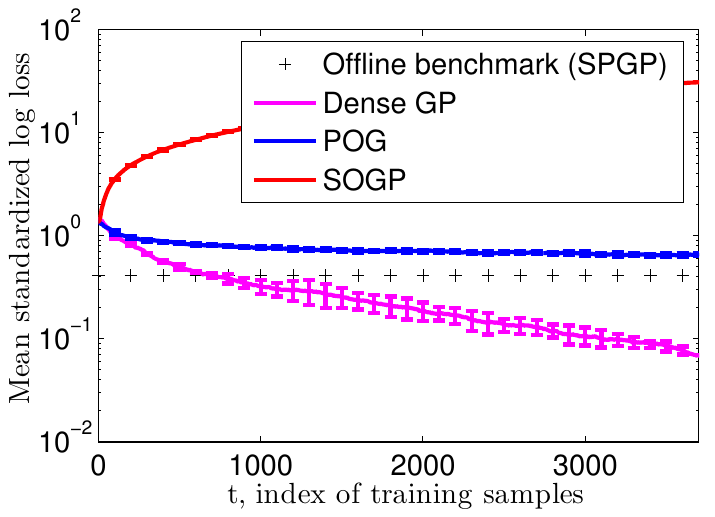}
\caption{}
\label{fig:kin_msll}
\end{subfigure}

\begin{subfigure}[b]{0.49\linewidth}
\centering
\includegraphics[width=\linewidth,height=0.8\linewidth]
{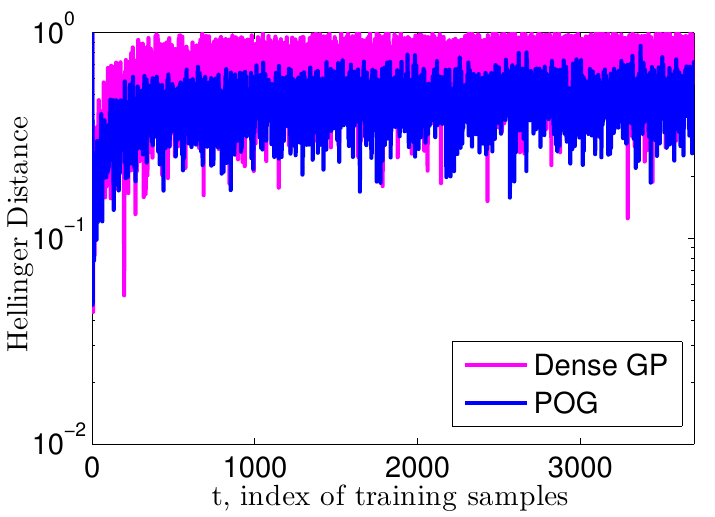}
\caption{}
\label{fig:hellinger_kin}
\end{subfigure}
\begin{subfigure}[b]{0.49\linewidth}
\centering
\includegraphics[width=\linewidth,height=0.8\linewidth]
{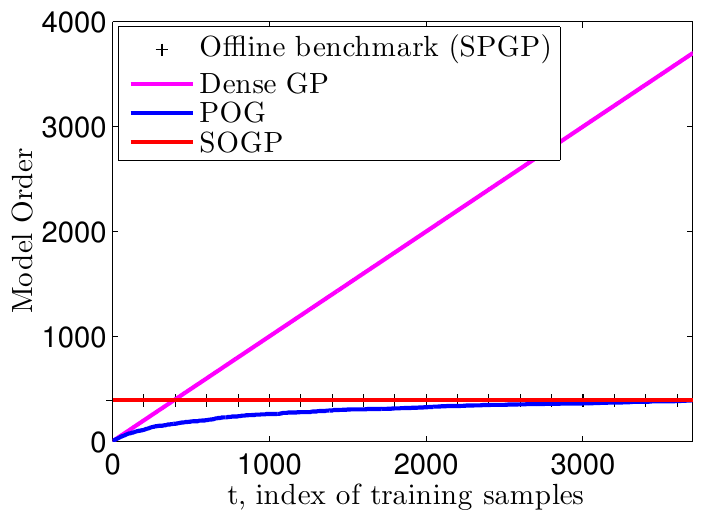}
\caption{}
\label{fig:modelorder_kin}
\end{subfigure}
\caption{Performance of POG, SOGP \citep{csato2002sparse} and SPGP \citep{snelson2006sparse} and Dense GP on \textit{kin-40k} data set. POG yields superior performance to SOGP, and attains comparable model fitness to SPGP and Dense GP, which are only implementable in the offline setting. POG achieves this performance by keeping the model order dynamic and not fixed as SOGP and SPGP, which eventuates in complexity that is independent of the training sample size.\vspace{-5mm}}
\label{fig:kin40k_data}
\end{figure*}

\subsection{Boston Housing data}\label{sec:boston}
In this section we study the performance of POG along with three other algorithms mentioned above on a real data set obtained from housing information in the area of Boston\footnote{https://www.cs.toronto.edu/~delve/data/boston/bostonDetail.html}. There are a total of $455$ training samples and $51$ test samples with an input dimension of size $8$. The compression budget, $\eps_t$ was fixed at $\eps_t=4.9\times 10^{-5}$. 

 For the fair comparison, we have considered the constant model order of SOGP and SPGP to be equal to the final settled model order of POG, i.e., $83$ (can be observed from Fig.\ref{fig:mo_boston}). In Fig. \ref{fig:hellinger_boston} and \ref{fig:mo_boston}, we demonstrate the evolution of POG in terms of the Hellinger distance and model order. {From Fig. \ref{fig:hellinger_boston} it can be observed that the hellinger distance between the successive iterates of POG lower bounds the hellinger distance between successive iterates of Dense GP, thereby validating our Assumption  \ref{as:unbiasedness}.} In Fig. \ref{fig:mo_boston}, we present the evolution of model order of all approaches, from which we may observe that the Dense GP retains all past points into its posterior representation, and hence its complexity grows unbounded. By contrast, the model complexity of POG settles down to $83$. The fixed dimension approach of the SOGP and SPGP algorithm can also be verified from Fig. \ref{fig:mo_boston}.
 
 We compare the performance of POG, SOGP, Dense GP and SPGP in Figs \ref{fig:boston_smse} and \ref{fig:boston_msll}. Observe that for both the test error metrics (SMSE and MSLL) \eqref{eq:smse_msll}, POG outperforms SOGP and gives comparable performance to Dense GP. However, Dense GP achieves better performance at the cost of growing dictionary size which is evident from Fig. \ref{fig:mo_boston}, i.e., $405$ in comparison to the model order of $83$ of POG.  Note that SPGP achieves the best performance, but this performance gain is attained by conducting hyper-parameter (pseudo-input) search over training examples themselves, which is possible by virtue of offline processing. In contrast, POG, SOGP and the Dense GP operate online. This reasoning is why we clarify that SPGP is labeled in the legend as an ``Offline benchmark''. Amongst the online approaches, POG attains a favorable tradeoff of complexity and model fitness.

\subsection{Abalone data} \label{sec:abalone}
We shift focus to studying the performance of POG algorithm on a larger data set \textit{abalone} \citep{nash1994population} comprised of $3133$ training samples and $1044$ test samples relative to the aforementioned comparators. The data set focuses on the task of predicting physical properties of abalone, a type of shellfish. 
 The objective is to determine the age of abalone by predicting the number of rings from the $8$ dimensional inputs vector. The input vector consists of different attributes like sex, length, height, weight and other weights. The hyperparameter optimisation and noise prior is obtained exactly in the same way as explained before in Sec. \ref{sec:experiments}. The compression budget, $\eps_t$ is fixed at $1\times 10^{-5}$. 

Similar to the \textit{Boston} data set, here also we have kept the model order of the SOGP and SPGP constant at $394$ (the final settled model order of POG) for fair comparison of both the algorithms. From Fig. \ref{fig:abalone_smse} and Fig. \ref{fig:abalone_msll}, it can be observed that POG performs better than SOGP and yields performance close to Dense GP and SPGP for both the test error metrics with a model order growth directly determined by the statistical significance of training points across time (Fig. \ref{fig:mo_abalone}). The evolution of Hellinger distance is plotted in Fig. \ref{fig:hellinger_abalone}. From Fig.\ref{fig:mo_abalone}, one may observe that although 2983 training samples have been processed, POG determines to retain only $394$ dictionary points, in contrast to the full dictionary of 2983 points in the ``Dense GP''. Thus, overall one may conclude that POG attains comparable model fitness to the Dense GP and SPGP while being amenable to online operation, in contrast to SOPG. 

\subsection{kin40k data}
Next we study the performance of POG on the \textit{kin-40k} data set. This is a data set belonging to the \textit{kin} family of datasets from the DELVE archive -- see \citep{seeger2003fast}. The data set is generated from the realistic simulation of the forward dynamics of an 8 link all-revolute robot arm. We consider a small subset of 4000 training samples and 200 test samples from the original data set for our experiment. The goal is to estimate the distance of the robot arm head from a target, based on the 8 dimensional input consisting of joint positions and twist angles\footnote{http://www.cs.toronto.edu/~delve/data/kin/desc.html}. The \textit{kin-40k} data set is generated with maximum non-linearity and little noise, there by resulting in a highly difficult regression task\footnote{http://ida.first.fraunhofer.de/anton/data.html} \citep{chen2006trends}.

 The performance of the POG algorithm on the $\textit{kin-40k}$ data set is plotted in Fig. \ref{fig:kin40k_data}.  The compression budget, $\eps_t$ was fixed at $\eps_t=1\times10^{-5}$. Observe that in Fig. \ref{fig:kin_smse} and Fig. \ref{fig:kin_msll} POG performs better than SOGP and gives comparable performance to SPGP and Dense GP if not better. The model order $M$ for SPGP and SOGP is fixed at $392$, the complexity discerned by POG in Fig. \ref{fig:modelorder_kin}. This is in contrast to the linear increase of the Dense GP. We further visualize the distributional evolution of POG in Fig. \ref{fig:hellinger_kin}. 
 
In contrast to the better performance of SPGP in comparison to the Dense GP algorithm for the previous two data sets, i.e., \textit{Boston} and \textit{Abalone}, here we see the reverse. This may be an artifact of the fact that we considered a subset of data (4000 training samples from 10000 training samples, 200 test samples from 30000 test samples) from the actual kin40k data set for experimentation. Thus, in the case when one operates with a data subset, SPGP is not able to optimize pseudo-inputs to be representative of the subset, whereas Dense GP which stores all the points in the dictionary, and thus performs favorably. This phenomenon inverts the ranking observed in the \textit{Boston} and {Abalone} data sets, and suggests pseudo-input search of SPGP must happen together with selecting the appropriate number of points, which in practice is challenging.

{One additional note is that across these distinct data domains, one may observe that SPGP outperforms the dense GP. As pointed out by \citep{burt2019rates}, this may be attributable to the fact that a fixed offline pseudo-input dimension fits a model along only the dominant eigenvalues of the associated kernel matrix. For cases where the covariance of the underlying generating process exhibits comparable eigenvalue concentration, this can prove beneficial.}

{Next, we present the run time (in seconds) for a full training epoch and a single incremental update of all the algorithms in Table. \ref{table:model_time_complexity} for all the three data sets.  It can be observed from Table. \ref{table:model_time_complexity} that POG  has significantly more run time as compared to SOGP and SPGP, thus it stands out crucial to bring the key facts pertaining to this and is presented below: 
\begin{itemize}
\item {\bf POG vs. SOGP}: In POG, the model complexity (model order or size of basis vector set) adapts dynamically with the training process. However, in SOGP the model complexity is fixed before the start of the training process, which in various instances is difficult to predict apriori. POG at every training iteration searches for the  optimal number of points which best represent the function. Thus having significantly better performance  which can be observed in the SMSE and MSLL performances from the numerical results and also from Table. 2 of the main paper. 
\item {\bf POG vs. SPGP}: Its needless to say that SPGP being an offline algorithm has all the training data beforehand and thus requires very less run time for the training process. POG being an online algorithm goes through the training process at every iteration and thereby incurring more time than SPGP. 
\item {\bf POG vs. Dense GP}: Since POG works in tandem with the DHMP (Algorithm 2) compression algorithm, thus it learns the optimal number of points required to represent the function with a given error tolerance. However, Dense GP does no compression and keeps on storing all the training points which accounts for better performance but also on the other hand incurs more running time than POG.  
\end{itemize}}


\section{Conclusion}\label{sec:conclusion}

We presented a new approach to sparse approximation to GPs under streaming settings. In particular the computational complexity of the mean and covariance are proportional to time, which renders standard GPs inoperable in online settings. 

To ameliorate this issue, we proposed a sparsification routine which operates on a faster time-scale than Bayesian inference, and  approximates the posterior distribution. We consider Hellinger distance between the current posterior and its population counterpart for studying the theoretical guarantees of our sparse GP algorithm. Since Hellinger distance is computable in closed form for Gaussians, thus making it a suitable choice among valid Lyapunov functions considered for study of asymptotic behavior of GPs. By introducing hard-thresholding projections based on matching pursuit, we were able to design sparsification rules that nearly preserve the theoretical statistical behavior of GPs while bringing their memory usage under control.

 The performance of the algorithm was tested on three highly non-linear real data sets and also compared against benchmark alternatives. The experimentation illuminates the benefits of not fixing the subspace dimension a priori, but instead the statistical error, under practical settings, and supports our theoretical findings. Future directions include the merits and drawbacks of variable-subspace approaches to variational approximations to the GP likelihood, as well as a better conceptual understanding of methods based on pseudo-input search.


\appendix
\section{Technical Lemmas}
\begin{lemma}\label{lemma:lyapunov}
Under the same conditions as Lemma \ref{lemma:posterior_consistency}, the probability of event
$\eta_t:=\{d_H(\rho_t,\rho_{t-1})<\gamma \given \ccalS_t \}$ with respect to $\mathbb{P}_{\Pi}$ approaches $1$ as $t\rightarrow \infty$ for any $\gamma > 0$.
\end{lemma}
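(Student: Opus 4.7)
The plan is to reduce this single-step statement to the two-sided posterior contraction result given by Lemma~\ref{lemma:posterior_consistency} together with the triangle inequality for the Hellinger metric. Since $d_H$ is a bona fide metric on the space of probability measures, for any $\gamma>0$ we have
\begin{equation*}
d_H(\rho_t,\rho_{t-1}) \;\leq\; d_H(\rho_t,\rho_0) + d_H(\rho_{t-1},\rho_0),
\end{equation*}
so it suffices to force each summand below $\gamma/2$ with $\Pi$-probability tending to one.

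First, I would invoke Lemma~\ref{lemma:posterior_consistency} with tolerance $\gamma/2$ in place of $\gamma$. This yields, almost surely with respect to the data-generating law $\mathbb{P}_0$,
\begin{equation*}
\mathbb{P}_{\Pi}\!\left\{d_H(\rho_t,\rho_0) < \gamma/2 \,\big|\, \ccalS_t\right\} \;\longrightarrow\; 1
\end{equation*}
as $t\to\infty$, and identically (by applying the same lemma at time $t-1$, which is legitimate because the assumptions on the kernel, the hyper-parameter tail, and the i.i.d.\ sampling from $[0,1]^p$ are uniform in $t$) the same convergence holds for $\rho_{t-1}$ conditioned on $\ccalS_{t-1}\subset\ccalS_t$.

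Next, I would combine the two events by a simple union bound. Writing $A_t=\{d_H(\rho_t,\rho_0)<\gamma/2\}$ and $B_t=\{d_H(\rho_{t-1},\rho_0)<\gamma/2\}$, on the intersection $A_t\cap B_t$ the triangle inequality displayed above gives $d_H(\rho_t,\rho_{t-1})<\gamma$, i.e.\ $\eta_t$ occurs. Therefore
\begin{equation*}
\mathbb{P}_{\Pi}\{\eta_t \mid \ccalS_t\} \;\geq\; 1 - \mathbb{P}_{\Pi}\{A_t^c\mid \ccalS_t\} - \mathbb{P}_{\Pi}\{B_t^c\mid \ccalS_t\},
\end{equation*}
and the right-hand side tends to $1$ by the previous step, completing the proof.

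The only subtlety, which is where I expect to have to be careful, is the conditioning on $\ccalS_t$ versus $\ccalS_{t-1}$: Lemma~\ref{lemma:posterior_consistency} is stated for the posterior conditional on the full sample $\ccalS_t$ used to define it, and one must observe that adding the extra pair $(\bbx_t,y_t)$ to $\ccalS_{t-1}$ does not invalidate the conditional statement for $\rho_{t-1}$, which is a measurable functional of $\ccalS_{t-1}$; equivalently, $\{d_H(\rho_{t-1},\rho_0)<\gamma/2\}$ is $\ccalS_{t-1}$-measurable and hence its conditional probability under the Gaussian prior $\Pi$ given $\ccalS_t$ coincides with the one given $\ccalS_{t-1}$. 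With this observation in place, the union-bound step is routine and the lemma follows.
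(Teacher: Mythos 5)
Your proof is correct, and it follows the same basic strategy as the paper --- triangle inequality through the population posterior $\rho_0$, then Lemma~\ref{lemma:posterior_consistency} --- but the way you handle the two resulting terms is genuinely different and, in fact, cleaner. The paper first invokes the monotone contraction $d_H(\rho_t,\rho_0)\leq d_H(\rho_{t-1},\rho_0)$ (citing \cite{ghosal2000convergence}) to collapse the bound to $2\,d_H(\rho_{t-1},\rho_0)$, and then applies the consistency result only once; the event containment it then writes down in \eqref{eq:subset} has its inclusion pointing the wrong way for the subsequent probability bound (given $d_H(\rho_t,\rho_{t-1})\leq 2\,d_H(\rho_{t-1},\rho_0)$, the set $\{2\,d_H(\rho_{t-1},\rho_0)<\gamma\}$ is contained in $\{d_H(\rho_t,\rho_{t-1})<\gamma\}$, not the reverse), so the argument as printed needs the directions flipped. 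Your union bound over the two events $A_t$ and $B_t$ avoids the monotonicity claim entirely, gets all inequality directions right automatically, and your closing remark on the $\ccalS_{t-1}$-measurability of $B_t$ addresses the one conditioning subtlety the paper glosses over. The only caution I would add is one that applies equally to both arguments: the quantity in \eqref{eq:hellinger} is the \emph{squared} Hellinger distance, which does not itself satisfy the triangle inequality, so the argument implicitly relies on the (correct) unsquared form in \eqref{eq:hellinger_gaussian}; you should state that you are working with the metric form of $d_H$.
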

%
\begin{proof}
%
%
Let's analyze the probability of event $\eta_t=\{d_H(\rho_t,\rho_{t-1})<\gamma \given \ccalS_t \}$ with respect to $\mathbb{P}_{\Pi}$ by first looking at the argument of the event, $d_H(\rho_t,\rho_{t-1})$, and applying the triangle inequality
\begin{equation}\label{eq:triangle}
d_H(\rho_t,\rho_{t-1}) \leq d_H(\rho_t,\rho_{0}) + d_H(\rho_{t-1},\rho_{0}) 
\end{equation}
From posterior contraction rates \citep{ghosal2000convergence}[Theorem 2.5] we have $d_H(\rho_t,\rho_{0})\leq d_H(\rho_{t-1},\rho_{0})$ in probability (w.r.t $\pi$), which we may substitute into the right-hand side of \eqref{eq:triangle} to obtain
\begin{equation}\label{eq:traingle2}
d_H(\rho_t,\rho_{t-1}) \leq 2~d_H(\rho_{t-1},\rho_{0}) 
\end{equation}
We can now write the following relationship for event for any $\gamma$ as
\begin{equation}\label{eq:subset}
\!\!\!\{d_H(\rho_t,\rho_{t-1}) \!  \!<\! \gamma \} \!\subset \!\{ 2~d_H(\rho_{t-1},\rho_{0}) \!<\! \gamma \}\; .
\end{equation}
Now we compute the prior probability conditioned on  $\ccalS_t$,
\begin{equation}\label{eq:prob}
\mathbb{P}_{\Pi}\{d_H(\rho_t,\rho_{t-1}) \!  \!<\! \gamma \given \ccalS_t \} \leq \mathbb{P}_{\Pi}\Big\{ d_H(\rho_{t-1},\rho_{0}) \!<\! \frac{\gamma}{2} \given \ccalS_t \Big\}\; .
\end{equation}
By shifting the time-indices of the term on the right-hand side of \eqref{eq:prob}, we compute the prior probability conditioned on  $\ccalS_t$
using Lemma \ref{lemma:posterior_consistency} [cf.\eqref{eq:posterior_consistency_uncompressed}] and can be written as 
\begin{equation}\label{eq:triangle_prob}
\lim_{t\rightarrow \infty} \mathbb{P}_{\Pi}\{ d_H(\rho_t,\rho_{0})< \frac{\gamma}{2}  \given \ccalS_t \} =1
\end{equation}
Thus using \eqref{eq:triangle_prob}, we may conclude that the left-hand side of \eqref{eq:prob} then has limit less than or equal to 1, and hence its lim sup satisfies 
\begin{equation}\label{eq:proof_lemma2}
\limsup_{t\rightarrow \infty} \mathbb{P}_{\Pi}(d_H(\rho_t,\rho_{t-1})<\gamma)=1
\end{equation}
However, since GP posterior and the Hellinger metric are continuous, the preceding limit of $\mathbb{P}_{\Pi}(d_H(\rho_t,\rho_{t-1})<\gamma)$ exists, and hence is unique. Therefore, we may conclude the prior probability of the left-hand side of \eqref{eq:subset} also converges to one, yielding Lemma \ref{lemma:lyapunov}.

\end{proof}
\section{Proof of Theorem \ref{theorem:pog_consistency}}\label{sec:apx_thm1}
{\bf \noindent Proof of Theorem \ref{theorem:pog_consistency}\ref{theorem:diminishing}}:
 We relate the per-step behavior $d_H(\rho_{\bbD_t},\rho_{\bbD_{t-1}}) $ of Algorithm \ref{alg:pog} to the sample path of $d_H(\rho_t,\rho_{t-1}) $. First, apply the triangle inequality again to obtain that this distance decomposes into two terms: 
\begin{equation}\label{eq:hellinger_triangle}
d_H(\rho_{\bbD_t},\rho_{\bbD_{t-1}}) \leq d_H(\rho_{\bbD_t},\rho_{\tbD_t}) + d_H(\rho_{\tbD_t},\rho_{\bbD_{t-1}})
\end{equation}
The first term on the right-hand side of \eqref{eq:hellinger_triangle} is exactly the DHMP stopping criterion, and thus is no more than $\eps_t$. Therefore, we have the following containment relationship for events:
\begin{align}\label{eq:subset22}
\!\!\!\{d_H(\rho_{\bbD_t},\rho_{\bbD_{t-1}}) <\alpha \} &\nonumber \\
\qquad \subset \{ d_H(\rho_{\bbD_t},\rho_{\tbD_t})  &+ d_H(\rho_{\tbD_t},\rho_{\bbD_{t-1}}) < \alpha \} \nonumber \\
 \subset \{  d_H(\rho_{\tbD_t},&\rho_{\bbD_{t-1}}) + \epsilon_t < \alpha  \}
\end{align}
Let's compute the prior probability of \eqref{eq:subset22} for any $\alpha>0$ as
\begin{align}\label{eq:pog_evolution1}
\mathbb{P}_{\Pi}\{ & d_H(\rho_{\bbD_t},\rho_{\bbD_{t-1}})  < \alpha \given \ccalS_t \} \nonumber \\
&\leq  \mathbb{P}_{\Pi}\{d_H(\rho_{\bbD_t},\rho_{\tbD_t}) + d_H(\rho_{\tbD_t},\rho_{\bbD_{t-1}})< \alpha \given \ccalS_t\}  \nonumber \\
& \leq  \mathbb{P}_{\Pi}\{d_H(\rho_{\tbD_t},\rho_{\bbD_{t-1}}) +  \eps_t< \alpha  \given \ccalS_t\} 
\end{align}
where in the last expression we have applied the DHMP stopping criterion. Now, subtract the constant $\eps_t$ from both sides inside the event on the right-hand side of \eqref{eq:pog_evolution1} to define  the event on the right-hand side of \eqref{eq:pog_evolution1} as $\tilde{\eta}_t:=\{ d_H(\rho_{\tbD_t},\rho_{\bbD_{t-1}}) <\alpha -\epsilon_t \given \ccalS_t \}$. Subsequently, define $\gamma=\alpha-\epsilon_t>0$.

The event sequences $\eta_t$ (defined in Lemma \ref{lemma:lyapunov}) and $\tilde{\eta}_t$ quantify the effect of the \emph{same} Bayesian {a posteriori} updates across time, but using posterior distributions parameterized by different kernel dictionaries, namely, $\ccalS_{t-1}\cup \bbx_{t} $ [cf. \eqref{eq:GP_posterior_t}] versus $\tbD_t=[\bbD_{t-1} ; \bbx_t]$ [cf. \eqref{eq:GP_posterior_D}]. 
To the event $\tilde{\eta}_t$ we can apply Assumption \ref{as:unbiasedness} to write
%
%
\begin{align}\label{eq:pog_evolution}
 \mathbb{P}_{\Pi}\{d_H(\rho_{\tbD_t},\rho_{\bbD_{t-1}}) < \alpha - \eps_t \given \ccalS_t\} &=  \mathbb{P}_{\Pi}\{\tilde{\eta}_t \given \ccalS_t\} \nonumber \\
 &\leq  \mathbb{P}_{\Pi}\{\eta_t \given \ccalS_t\}
\end{align}
Now, suppose $\eps_t \rightarrow 0$ as $t \rightarrow \infty$ so that $\gamma \rightarrow \alpha$. Thus by using Lemma \ref{lemma:lyapunov} on the right-hand side of \eqref{eq:pog_evolution} we conclude that
\begin{equation}
\limsup_{t\rightarrow \infty} \mathbb{P}_{\Pi}(d_H(\rho_{\bbD_t},\rho_{\bbD_{t-1}})<\alpha - \eps \given \ccalS_t)=1
\end{equation}

Again, we use continuity of the GP posterior and the Hellinger metric to conclude the preceding limit exists, and therefore
\begin{equation}
\lim_{t\rightarrow \infty} \mathbb{P}_{\Pi}(d_H(\rho_{\bbD_t},\rho_{\bbD_{t-1}})<\alpha - \eps \given \ccalS_t)=1
\end{equation}
Therefore, for choice of compression budget $\eps_t \rightarrow 0$, we have \\$\mathbb{P}_{\Pi}(d_H(\rho_{\bbD_t},\rho_{\bbD_{t-1}})<\gamma\given \ccalS_t)\rightarrow 1$ for any 
 $\gamma> 0$. Substitute
in the definition of 
$\gamma=\alpha-\eps_t$ to obtain Theorem \ref{theorem:pog_consistency}\ref{theorem:diminishing}.
  $\hfill\blacksquare$

{\bf \noindent Proof of Theorem \ref{theorem:pog_consistency}\ref{theorem:constant}}:
We again relate the asymptotic probabilistic behavior of $d_H(\rho_{\bbD_t},\rho_{\bbD_{t-1}})$ defined by Algorithm \ref{alg:pog} to the true uncompressed  sequence $d_H(\rho_t,\rho_{t-1})$ where $\rho_t$ is given in \eqref{eq:GP_posterior_t}. Begin with the expression \eqref{eq:pog_evolution1}, followed by subtracting $\eps$ from both sides to obtain:
\begin{align}\label{eq:pog_evolution_constant}
\mathbb{P}_{\Pi}\{ & d_H(\rho_{\bbD_t},\rho_{\bbD_{t-1}})  < \alpha \given \ccalS_t \} \nonumber \\
& \leq  \mathbb{P}_{\Pi}\{d_H(\rho_{\tbD_t},\rho_{\bbD_{t-1}}) < \alpha - \eps \given \ccalS_t\} 
\end{align}
The right-hand side of \eqref{eq:pog_evolution_constant} is $ \mathbb{P}_{\Pi} \{ \tilde{\eta}_t   \}$, to which we may apply Assumption \ref{as:unbiasedness} which states that $\mathbb{P}_{\Pi} \{\eta_t  \} \geq \mathbb{P}_{\Pi} \{ \tilde{\eta}_t   \}$, provided $\gamma =\alpha - \eps > 0$, to obtain
\begin{align}\label{eq:pog_evolution_constant2}
\mathbb{P}_{\Pi}\{&d_H(\rho_{\tbD_t},\rho_{\bbD_{t-1}}) < \alpha - \eps \given \ccalS_t\}   \\
&  
\quad \leq \mathbb{P}_{\Pi}\{d_H(\rho_t,\rho_{t-1})<\alpha - \eps \given \ccalS_t \}\nonumber
\end{align}
By Lemma 2, the supremum of the probability of the right-hand side of \eqref{eq:pog_evolution_constant2} approaches $1$ as $t\rightarrow \infty$ for $\gamma=\alpha-\eps>0$.
Thus now the left-hand side of \eqref{eq:pog_evolution_constant} can be written as:
\begin{align}\label{eq:pog_evolution_constant3}
\limsup_{t\rightarrow \infty} \mathbb{P}_{\Pi}\{ & d_H(\rho_{\bbD_t},\rho_{\bbD_{t-1}})  < \alpha \given \ccalS_t \} =1
\end{align}
Again, we exploit the continuity of the GP posterior and the Hellinger metric to conclude the preceding limit
exists. Theorem \ref{theorem:pog_consistency}\ref{theorem:constant} follows from substituting in $\alpha=\gamma+\eps$ into \eqref{eq:pog_evolution_constant3}. $\hfill\blacksquare$

\section{Proof of Theorem \ref{theorem:pog_parsimony}}\label{sec:apx_thm2}
This proof is inspired by, but conceptually distinct from, that of Theorem 3 in \citep{koppel2017parsimonious}. We consider two subsequent iterates generated by Algorithm \ref{alg:pog} and connect the model order growth from one to the next to the approximation error in the Hellinger metric with the newest point $\bbx_{t}$ removed. We then analyze conditions under which this condition is false for all subsequent times, and are able to connect this condition to the distance between the Hilbert subspace defined by the current kernel dictionary $\bbD_t$ and the kernel evaluation of the newest point $\kappa(\bbx_t, \cdot)$ through the definition of the Hellinger metric for multivariate Gaussians. Then, we may apply a covering number argument, exploiting the fact that the feature space $\ccalX$ is compact, to establish that the stopping criterion of DHMP is always violated, and hence no additional points are added to the kernel dictionary after a certain time, in a manner similar to \citep{1315946}[Theorem 3.1].

 Begin by considering two subsequent posterior distributions $\rho_{\bbD_t}$ and $\rho_{\bbD_{t+1}}$ generated by Algorithm \ref{alg:pog} of model order $M_t$ and $M_{t+1}$, respectively, assuming a constant compression budget $\eps>0$. Suppose that the model order of $\rho_{\bbD_{t+1}}$ is not larger than the previous posterior $\rho_{\bbD_t}$, i.e., $M_{t+1} \leq M_t$. This relation is valid when the stopping criterion of Algorithm \ref{alg:komp} is \emph{violated} for the kernel dictionary with the newest sample added $\tbD_{t+1} = [\bbD_t ; \bbx_t]$ of size $M_t+ 1$. The negation of the stopping criterion for Algorithm \ref{alg:komp} is stated as
 \begin{equation}\label{eq:compression_error}
\hspace{3cm} \min_{j=1,\dots, M_t +1 } \gamma_j \leq \eps \; .
  \end{equation}
Note that \eqref{eq:compression_error} lower bounds the approximation error $\gamma_{M_t + 1}$ associated with removing $\bbx_{t}$. Therefore, if $\gamma_{M_t + 1} \leq \eps$, then \eqref{eq:compression_error} holds, and the model order does not grow. Thus, we may consider $\gamma_{M_t + 1}$ in lieu of $\gamma_j$ for each $j$.

The definition of $\gamma_{M_t + 1} $, using its expression in Algorithm \ref{alg:komp} with dictionary $\tbD_{t+1}$, is $d_H(\rho_{\tbD_{-M_t+1}}, \rho_{\tbD_{t+1}})$. Using the expression for the Hellinger metric for multivariate Gaussians in \eqref{eq:hellinger_gaussian}, we may glean that  $\gamma_{M_t + 1} $ depends only on the difference between the mean and covariances with and without sample point $\bbx_t$. That is, $\gamma_{M_t + 1} \propto (\mu_{t+1\given \bbD_t} - \mu_{\bbD_t}, \Sigma_{t+1\given \bbD_t} - \Sigma_{\bbD_t} ) $.

Unfortunately, there is no closed form expression that relates these differences to the distance between the Hilbert subspace defined by the current dictionary $\ccalH_{\bbD_t}:=\text{span}\{\kappa(\bbd_j, \cdot)\}_{j=1}$ and the kernel evaluation of the latest point $\kappa(\bbx_t, \cdot)$. However, the distances between means and covariances are completely determined by this distance,
 $$\text{dist}(\kappa(\bbx_t, \cdot),\ccalH_{\bbD_t}):=\min_{\bbv^\in\reals^{M_t}}\|\kappa(\bbx_t,\cdot) -  \bbv^T\mathbf{\kappa}_{\bbD_t}(\cdot)\|_{\ccalH}.$$
  Therefore, if there exists some $\eps'>0$ such that $\text{dist}(\kappa(\bbx_t, \cdot),\ccalH_{\bbD_t}) \leq \eps'$, then for some other $\eps>0$,  $\gamma_{M_t + 1} \leq \eps$. 

The contrapositive of this statement is that $\gamma_{M_t + 1} > \eps$, implies $\text{dist}(\kappa(\bbx_t, \cdot),\ccalH_{\bbD_t}) > \eps'$ for some $\eps'>0$. Thus, the DHMP stopping criterion is violated whenever for distinct $\bbd_j$ and $\bbd_k$ for $j,k\in\{1,\dots,M_t\}$ satisfy $\|\kappa(\bbd_j, \cdot) - \kappa(\bbd_k,\cdot)\|_{\ccalH} > \eps'$ for some $\eps'>0$, and for such cases, the model order grows from times $t$ to $t+1$.
We may now proceed as in the proof of \citep{1315946}[Theorem 3.1]: since $\ccalX$ is compact and $\kappa$ is continuous, the range of the kernel transformation  $\phi(\ccalX):=\kappa(\ccalX,\cdot)$ of the feature space $\ccalX$ is compact. Therefore, the number of balls of radius $\eps'$ needed to cover $\phi(\ccalX)$ is finite (see \citep{anthony2009neural}, for instance), and depends on the covering number of $\phi(\ccalX)$ at scale $\eps'$. Therefore, for some $M^\infty$, if $M_t=M^{\infty}$, the condition $\text{dist}(\kappa(\bbx_t, \cdot),\ccalH_{\bbD_t}) \leq \eps'$ is valid, in which case $\gamma_{M_t + 1} \leq \eps$ holds, and thus \eqref{eq:compression_error} is true. Therefore, $M_t \leq M^{\infty}$ for all $t$. $\hfill\blacksquare$

We sharpen this dependence by noting that \citep{1315946}[Proposition 2.2] states that for a Lipschitz continuous Mercer kernel $\kappa$ on a compact set $\ccalX\subset\reals^p$, there exist a constant $Y$ such that for any training set $\{\bbx_u\}_{u\leq t}$ and any $\nu>0$, the number of elements in the dictionary $M$ satisfies
$$M\leq Y\left(\frac{1}{\nu}\right)^p$$
where $Y$ is a constant depending on the radius of $\ccalX$ and the kernel hyper-parameters. By the previous reasoning, we have $\nu=\epsilon'$. Taken with the fact that the Euclidean and Hellinger distances are constant factors apart, we have  know that $\epsilon'=\ccalO(\epsilon)$, which overall allows us to conclude
$$M\leq \ccalO\left(\frac{1}{\epsilon}\right)^p$$

\bibliographystyle{spbasic} 
\bibliography{Manuscript}




\end{document}